\definecolor{bluecite}{HTML}{0875b7}
\newtheorem*{theorem}{Theorem 1}
\title{Oryx: a Scalable Sequence Model for\\Many-Agent Coordination in Offline MARL}
\author{
Felix Chalumeau\thanks{Equal contribution. Corresponding author: \texttt{f.chalumeau@instadeep.com}}~~$^1$~~~\textbf{Daniel Rajaonarivonivelomanantsoa}$^*$$^{1,2}$~~~\textbf{Ruan de Kock}$^*$$^1$\\ \textbf{Claude Formanek}$^1$~~~\textbf{Sasha Abramowitz}$^1$~~~\textbf{Oumayma Mahjoub}$^1$~~~\textbf{Wiem Khlifi}$^1$\\ \textbf{Simon Du Toit}$^1$~~~~\textbf{Louay Ben Nessir}$^1$~~~~\textbf{Refiloe Shabe}$^1$~~~~\textbf{Arnol Fokam}$^1$\\ \textbf{Siddarth Singh}$^1$~~~~~\textbf{Ulrich Mbou Sob}$^1$~~~~~\textbf{Arnu Pretorius}$^{1,2}$ \\
\\
$^1$InstaDeep \\
$^2$Stellenbosch University
}
\author{%
Claude Formanek\thanks{Correspondence: c.formanek@instadeep.com}~~$^{1,2}$~~~
\textbf{Omayma Mahjoub}$^{1}$~~~  
\textbf{Louay Ben Nessir}$^{1}$~~~
\textbf{Sasha Abramowitz}$^{1}$~~~\\
\textbf{Ruan de Kock}$^{1}$~~~
\textbf{Wiem Khlifi}$^{1}$~~~
\textbf{Daniel Rajaonarivonivelomanantsoa}$^{1,3}$~~~
\textbf{Simon Du Toit}$^{1}$~~~\\
\textbf{Arnol Fokam}$^{1}$~~~
\textbf{Siddarth Singh}$^{1}$~~~
\textbf{Ulrich Mbou Sob}$^{1}$~~~
\textbf{Felix Chalumeau}$^{1}$~~~\\
\textbf{Arnu Pretorius}$^{1,3}$~~~
\\\\
$^{1}${InstaDeep}\quad
$^{2}${University of Cape Town}\quad 
$^{3}${Stellenbosch University}
}
\begin{document}

\maketitle

\begin{abstract} A key challenge in offline multi-agent reinforcement learning (MARL) is achieving effective many-agent multi-step coordination in complex environments. In this work, we propose \textbf{Oryx}, a novel algorithm for offline cooperative MARL to directly address this challenge. Oryx adapts the recently proposed retention-based architecture Sable \citep{Mahjoub2024SableAP} and combines it with a sequential form of implicit constraint Q-learning (ICQ) \citep{Yang2021BelieveWY}, to develop a novel offline autoregressive policy update scheme. This allows Oryx to solve complex coordination challenges while maintaining temporal coherence over long trajectories. We evaluate Oryx across a diverse set of benchmarks from prior works—SMAC, RWARE, and Multi-Agent MuJoCo—covering tasks of both discrete and continuous control, varying in scale and difficulty. \textbf{Oryx achieves state-of-the-art performance on more than 80\% of the 65 tested datasets}, outperforming prior offline MARL methods and demonstrating robust generalisation across domains with many agents and long horizons. Finally, we introduce new datasets to push the limits of many-agent coordination in offline MARL, and demonstrate Oryx's superior ability to scale effectively in such settings.\end{abstract}

\setcounter{footnote}{0}
\section{Introduction}

Cooperative Multi-Agent Reinforcement Learning (MARL) holds significant potential across diverse real-world domains, including autonomous driving \citep{Cornelisse2025BuildingReliableSimDriving}, warehouse logistics \citep{Krnjaic2024Warehouse}, satellite assignment \citep{holder2025Satelites}, and intelligent rail network management \citep{Schneider2024Intelligent}.
Yet, deploying MARL in realistic settings remains challenging, as learning effective multi-agent policies typically requires extensive and costly environment interaction.
This limits applicability to safety-critical or economically constrained domains, where the cost of real-world experimentation is prohibitively high.
Fortunately, in many such settings, large volumes of logged data—such as historical train schedules, traffic records, or robot navigation trajectories—are available.
By developing effective methods to distil robust, coordinated policies from these static datasets, we may unlock their full potential.

Offline MARL aims to address this exact challenge, training multi-agent policies solely from pre-collected data without further environment interaction.
However, learning in the offline multi-agent setting introduces two primary difficulties.
The first, well-studied challenge is \emph{accumulating extrapolation error}, which occurs when agents select actions during training that fall outside the distribution of the offline dataset.
This issue compounds rapidly as the joint action space grows exponentially with the number of agents  \citep{Yang2021BelieveWY}.
Recent works \citep{Wang2023OfflineMR, Matsunaga2023AlberDICEAO,Shao2023CounterfactualCQ,Bui2024ComaDICEOC} have made progress on addressing extrapolation error through policy constraints or conservative value estimation.
However, such methods were typically only tested on settings with relatively few agents. Leaving open the question of whether such methods are able to scale to more agents.

The second issue is \emph{miscoordination}, arising from the inability of agents to actively interact in the environment.
Offline training forces agents to rely entirely on historical behaviours observed in the dataset collected from other (often suboptimal) policies, risking the development of incompatible policies.
\citet{tilbury2024coordination} highlight how this miscoordination problem can significantly degrade performance in cooperative settings. Some recent works have tried to address this problem  \citep{barde2024modelbasedsolutionofflinemultiagent,qiao2025offline}. However, it's unclear how well these approaches scale, particularly when long temporal dependencies and many agents are involved. 

To jointly tackle the two fundamental challenges of extrapolation error and miscoordination in many-agent settings, we propose Oryx, a novel offline MARL algorithm that unifies scalable sequence modelling with constrained offline policy improvement.
Oryx integrates the retention-based sequence modelling architecture of Sable~\citep{Mahjoub2024SableAP} with an enhanced offline multi-agent objective based on ICQ~\citep{Yang2021BelieveWY}.
Furthermore, by leveraging a dual-decoder architecture—simultaneously predicting actions and Q-values—Oryx is able to use a counterfactual advantage \citep{foerster2018coma}, enabling robust and extrapolation-safe policy updates.
Finally, Oryx's sequential policy updating scheme explicitly addresses miscoordination by conditioning each agent's policy update on the actions already executed by other agents in sequence, thus ensuring stable policy improvement.

We extensively evaluate Oryx across a broad set of challenging benchmarks—SMAC, RWARE and Multi-Agent MuJoCo—that include discrete and continuous control, varying episode lengths, and diverse agent densities.
Oryx sets a new state-of-the-art in offline MARL, outperforming existing approaches in more than 80\% of the 65 evaluated datasets. Furthermore, we specifically test Oryx in large many-agent settings. To achieve this, we create new datasets (with up to 50 agents) and show that Oryx maintains its superior performance at such scales.
Overall, by addressing both extrapolation error and miscoordination in a scalable sequence modelling framework, Oryx significantly advances offline MARL, bringing us closer to being able to reliably deploy cooperative, multi-agent policies learned entirely from static data in complex, real-world domains. To accelarete future research in this direction, we make all of our datasets and code available on GitHub\footnote{\url{https://github.com/instadeepai/og-marl}}.

\section{Background}
\label{sec:background}

\textbf{Preliminaries -- problem formulation and notation.} We model cooperative MARL as a Dec-POMDP \citep{kaelbling1998planning} specified by the tuple $\langle \mathcal{N}, \mathcal{S}, \boldsymbol{\mathcal{A}}, P, R, \{\Omega^i\}_{i \in \mathcal{N}}, \{E_i\}_{i \in \mathcal{N}}, \gamma \rangle$. At each timestep $t$, the system is in state $s_t \in \mathcal{S}$. Each agent $i \in \mathcal{N}$ selects an action $a^i_t \in \mathcal{A}^i$, based on its local action-observation history $\tau^i_t = (o^i_0, a^i_0, \dots, o^i_t)$, contributing to a joint action $\boldsymbol{a}_t \in \boldsymbol{\mathcal{A}} = \prod_{i \in N} \mathcal{A}^i$. Executing $\boldsymbol{a}_t$ in the environment gives a shared reward $r_t = R(s_t, \boldsymbol{a}_t)$, transitions the system to a new state $s_{t+1} \sim P(\cdot|s_t, \boldsymbol{a}_t)$, and provides each agent $i$ with a new observation $o^i_{t+1} \sim E_i(\cdot|s_{t+1}, \boldsymbol{a}_t)$ to update its history as $\tau^i_{t+1} = (\tau^i_t, a^i_t, o^i_{t+1})$. The goal is to learn a joint policy $\pi(\boldsymbol{a}|\boldsymbol{\tau})$, that maximizes the expected sum of discounted rewards, $J(\boldsymbol{\pi}) = \mathbb{E}_{\pi} \left[ \sum_{t=0}^{\infty} \gamma^t r_t \right]$. 

In our work, we adopt the multi-agent notation from \cite{Zhong2024harl}. Specifically, let $i_{1:m}$ denote an ordered subset $\{i_1, \dots, i_m\}$ of $\mathcal{N}$, then $-i_{1:m}$ refers to its complement, and for $m=1$, we have $i$ and $-i$, respectively. The $k^{th}$ agent in the ordered subset is indexed as $i_k$. The action-value function is then defined as
\[
Q^{i_{1:m}}(\boldsymbol{\tau}, \mathbf{a}^{i_{1:m}}) = \mathbb{E}_{\mathbf{a}^{-i_{1:m}} \sim \pi^{-i_{1:m}}} [Q(\boldsymbol{\tau}, \mathbf{a}^{i_{1:m}}, \mathbf{a}^{-i_{1:m}})].
\]
Here, $Q^{i_{1:m}}(\boldsymbol{\tau}, \mathbf{a}^{i_{1:m}})$ evaluates the value of agents $i_{1:m}$ taking actions $\mathbf{a}^{i_{1:m}}$ having observed $\boldsymbol{\tau}$ while marginalizing out $\mathbf{a}^{-i_{1:m}}$. When $m=n$ (the joint action), then $i_{1:n} \in \text{Sym}(n)$, where $\text{Sym}(n)$ denotes the set of permutations of integers $1, \dots, n$, which results in $Q^{i_{1:n}}(\boldsymbol{\tau}, \mathbf{a}^{i_{1:n}})$ being equivalent to $Q(\boldsymbol{\tau}, \mathbf{a})$. When $m=0$, the function takes the form of canonical state-value function $V(\boldsymbol{\tau})$. Moreover, consider two disjoint subsets of agents, $j_{1:k}$ and $i_{1:m}$. Then, the multi-agent advantage function of $i_{1:m}$ with respect to $j_{1:k}$ is defined as
\begin{equation}
A^{i_{1:m}}(\boldsymbol{\tau}, \mathbf{a}^{j_{1:k}}, \mathbf{a}^{i_{1:m}}) = Q^{j_{1:k}, i_{1:m}}(\boldsymbol{\tau}, \mathbf{a}^{j_{1:k}}, \mathbf{a}^{i_{1:m}}) - Q^{j_{1:k}}(\boldsymbol{\tau}, \mathbf{a}^{j_{1:k}}). \label{eq:1}
\end{equation}
The advantage function $A^{i_{1:m}}(\boldsymbol{\tau}, \mathbf{a}^{j_{1:k}}, \mathbf{a}^{i_{1:m}})$ evaluates the advantage of agents $i_{1:m}$ taking actions $\mathbf{a}^{i_{1:m}}$ having observed $\boldsymbol{\tau}$ and given the actions taken by agents $j_{1:k}$ are $\mathbf{a}^{j_{1:k}}$, with the rest of the agents' actions marginalized out in expectation.

\textbf{Heterogeneous Agent RL framework -- principled algorithm design for MARL.} \citet{Zhong2024harl} show how practical and performant multi-agent policy iteration algorithms can be designed by leveraging sequential policy updates. Underlying much of this work is the multi-agent advantage decomposition theorem \citep{kuba2021settling}, which states that
\begin{equation} \label{eq:advantage_decomposition)}
A(\boldsymbol{\tau}, \mathbf{a}) = A^{i_{1:n}}(\boldsymbol{\tau}, \mathbf{a}^{i_{1:n}}) = \sum_{j=1}^{n} A^{i_j}(\boldsymbol{\tau}, \mathbf{a}^{i_{1:j-1}}, a^{i_j}).
\end{equation}
In essence, the above theorem ensures that if a policy iteration algorithm is able to update its policy sequentially across agents while maintaining positive advantage, i.e.\ $A^{i_j}(\boldsymbol{\tau}, \mathbf{a}^{i_{1:j-1}}, a^{i_j}) > 0\, \forall j$, it guarantees monotonic improvement. This result has guided the design of several recent algorithms under the \textit{heterogeneous agent RL} framework \citep{Zhong2024harl}, including multi-agent variants of PPO \citep{kuba2021trust}, MADDPG \citep{kuba2022heterogeneous} and SAC \citep{liu2024maximum}. 

\textbf{Sable -- efficient sequence modelling with long-context memory.} Sable \citep{Mahjoub2024SableAP} is a recently proposed online, on-policy sequence model for MARL. It is specifically designed for environments with long-term dependencies and large agent populations. Its key component is the \textit{retention mechanism}, inspired by RetNet~\citep{Sun2023RetentiveNA}, which replaces softmax-based attention with a decaying matrix component. This allows Sable to model sequences flexibly: either as recurrent (RNN-like), parallel (attention-like), or chunkwise (a hybrid of both). During training, Sable uses chunkwise retention for efficient parallel computation and gradient flow, while execution relies on a recurrent mode that maintains a hidden state to capture temporal dependencies and ensure memory-efficient inference. Sable's architecture consists of an encoder that processes per-agent observations into compact observation embeddings and value estimates, and a decoder that outputs predicted logits and actions. Training is performed via standard policy gradient, using the PPO objective~\citep{Schulman2017ProximalPO}. Further details on Sable's retention mechanism can be found in the Appendix.

\textbf{Implicit constraint Q-learning (ICQ) -- effective offline regularisation.} The key idea in ICQ \citep{Yang2021BelieveWY} is to avoid out-of-distribution actions in the offline setting by computing target Q-values sampled from the behaviour policy $\mu$ (extracted from the dataset), instead of $\pi$, such that the Bellman operator becomes $(\mathcal{T}^\pi Q)(\tau, a) = r + \gamma \mathbb{E}_{a' \sim \mu} [\rho(\tau', a') Q(\tau', a')]$, where $\rho(\tau', a') = \frac{\pi(a'|\tau')}{\mu(a'|\tau')}$ is an importance sampling weight. However, obtaining an accurate $\mu$ is itself difficult. Therfore, ICQ instead constrains policy updates such that $D_{KL}(\pi \parallel \mu)[\tau] \le \epsilon$, with the corresponding optimal policy taking the form $\pi_{k+1}^* (a | \tau) = \frac{1}{Z(\tau)} \mu(a | \tau) \exp\left(\frac{Q_{\pi_k}(\tau, a)}{\alpha}\right)$. Here, $\alpha > 0$ is a Lagrangian coefficient in the unconstrained optimisation objective and $Z(\tau) = \sum_{\tilde{a}} \mu(\tilde{a} | \tau) \exp\left(\frac{1}{\alpha} Q_{\pi_k}(\tau, \tilde{a})\right)$ is the normalisation function. Finally, solving for $\rho(\tau', a') = \frac{\pi_{k+1}^* (a'|\tau')}{\mu(a'|\tau')}$, gives the ICQ operator (that only requires sampling from $\mu$) as
\begin{equation}
\mathcal{T}_{ICQ} Q(\tau, a) = r + \gamma \mathbb{E}_{a' \sim \mu} \left[ \frac{1}{Z(\tau')} \exp\left(\frac{Q(\tau', a')}{\alpha}\right) Q(\tau', a') \right]. \label{eq:1}
\end{equation}
During training, the critic network parameters $\phi$, and the policy network parameters $\theta$, are optimised over a data batch $\mathcal{B}$ by minimising the following losses:
\begin{align*}
    \textbf{Critic }&\textbf{loss}: \\ 
    & J_Q(\phi) = \mathbb{E}_{\tau, a, \tau', a' \sim \mathcal{B}} \left[ r + \gamma\frac{1}{Z(\tau')} \exp\left(\frac{Q_{\phi^{-}}(\tau', a')}{\alpha}\right) Q_{\phi^{-}}(\tau', a') - Q_\phi(\tau, a) \right]^2 \\
    \textbf{Policy }&\textbf{loss}: \\
    & J_{\pi}(\theta) = \mathbb{E}_{\tau \sim \mathcal{B}} \left[ D_{KL} (\pi_{k+1}^* \parallel \pi_{\theta}) [\tau] \right] = \mathbb{E}_{\tau, a \sim \mathcal{D}} \left[ -\frac{1}{Z(\tau)} \log(\pi_\theta(a|\tau)) \exp\left(\frac{Q(\tau,a)}{\alpha}\right) \right]
\end{align*}
Here, $\phi^{-}$ denotes the target network parameters. In practice, the normalising partition function is computed as $\sum_{(\tau, a) \in \mathcal{B}} \exp{Q(\tau, a)/\alpha}$, over the mini-batch $\mathcal{B}$ sampled from the dataset $\mathcal{D}$.

on pairs present within the dataset. It does so by formulating policy updates under an implicit constraint optimization framework, ensuring the learned policy closely follows the behavior policy encoded in the data.

\section{Method}
\label{sec:method}

\begin{figure}[t]
    \centering
    \includegraphics[width=0.95\linewidth]{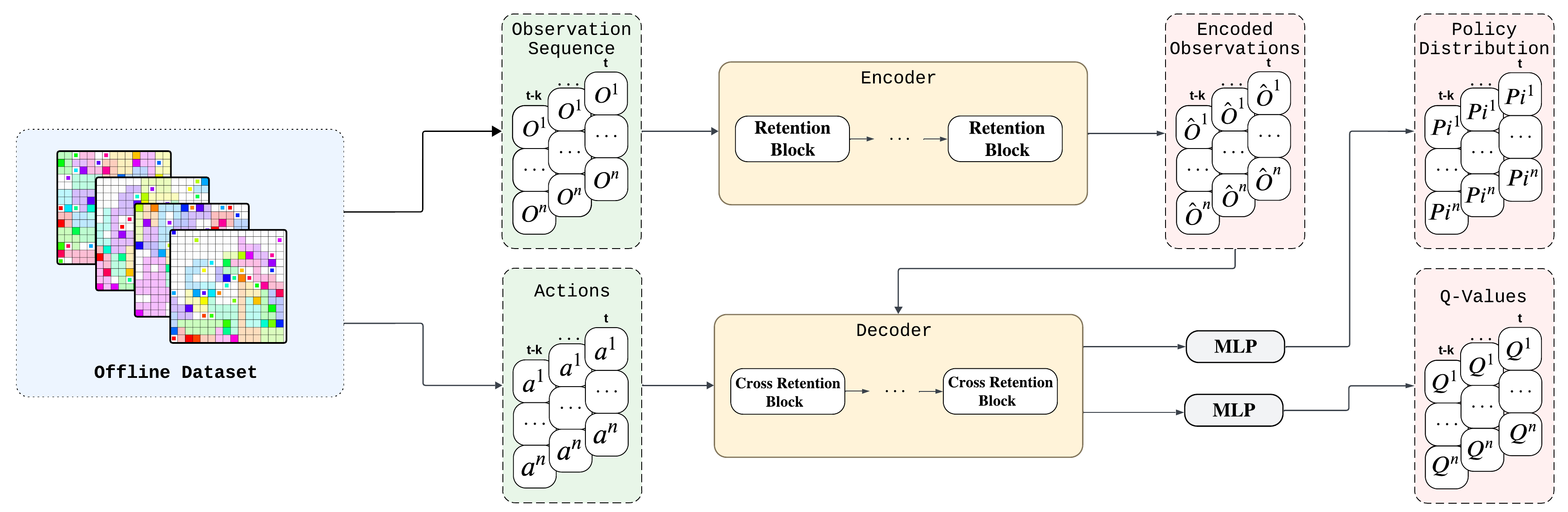}
    \caption{\emph{Oryx's model architecture}. The green blocks indicate the inputs to the model (in yellow), sourced from the dataset of online experiences (in blue). First, a sequence of agent observations from timestep \textit{t} to \textit{t+k} is passed through the encoder. Inside each retention block, the network performs joint reasoning over the agents  $(a_1, \dots, a_n)$ and temporal context $(t, \dots, t+k)$, producing encoded representations at each timestep. These encoded observations, along with the actions from the dataset, are passed to the decoder, which has two heads. One head returns Q-values, while the second returns a policy distribution for each agent for the full sequence.}
    \label{fig:oryx arch}
\end{figure}

Oryx integrates the strengths of Sable's auto-regressive retention-based architecture and ICQ's robust offline regularisation to address the specific challenges of long-horizon miscoordination and accumulating extrapolation error in offline MARL. 
A particularly effective model class that can naturally instantiate autoregressive policies is sequence models, with notable examples the multi-agent transformer (MAT) \citep{Wen2022MultiAgentRL} and Sable \citep{Mahjoub2024SableAP}. These models naturally represent the joint policy as a product of factors that similarly decompose auto-regressively as $\pi(\boldsymbol{a}|\boldsymbol{\tau}) = \prod^n_{j=1} \pi^{i_j}(a^{i_j}|\boldsymbol{\tau}, \mathbf{a}^{i_{1:j-1}})$.

\textbf{Network architecture.} Oryx modifies the Sable architecture, which features retention blocks to efficiently handle long sequential dependencies among agents. In particular, we employ a dual-output decoder structure where the decoder outputs both policy logits and Q-value estimates. The logits correspond to the action probabilities for each agent and the Q-value estimates explicitly capture the relative value of each agent's actions given the current historical context. Unlike the original Sable network, we do not have a value head as part of the encoder and instead train the encoder and decoder end-to-end with the combined critic and policy losses. A diagram of the computational flow of the Oryx architecture is provided in \autoref{fig:oryx arch}. 

\textbf{Autoregressive ICQ loss.} Next, we use the Oryx network's autoregressive policy structure and the advantage decomposition theorem ~\citep{kuba2021settling} to derive Theorem 1 (for proof see Appendix).

\begin{theorem}\label{th: ICQ update}
For an auto-regressive model, the multi-agent joint-policy under ICQ regularisation can be optimised sequentially for $j = 1, ..., n$ over a data batch $\mathcal{B}$ as follows:
\begin{equation*}
\pi^{i_j}_* = \underset{\pi^{i_j}}{\mathrm{argmax}}\, \mathbb{E}_{\boldsymbol{\tau}, \mathbf{a}^{i_{1:j}} \sim \mathcal{B}} \left[ -\frac{1}{Z^{i_{1:j}}(\boldsymbol{\tau})} \log(\pi^{i_j}(a^{ i_j} \mid \boldsymbol{\tau}, \mathbf{a}^{i_{1:j-1}})) \exp\left(\frac{A^{i_{1:j}}(\boldsymbol{\tau}, \mathbf{a}^{i_{1:j}})}{\alpha} \right) \right],
\end{equation*}
where $Z^{i_{1:j}}(\boldsymbol{\tau}) = \prod^j_{l=1}\sum_{\tilde{a}^{i_l}} \mu^{i_l}(\tilde{a}^{i_l}|\boldsymbol{\tau}, \mathbf{a}^{i_{1:l-1}}) \exp\left(\frac{A^{i_{1:l}}(\boldsymbol{\tau}, \mathbf{a}^{i_{1:l}})}{\alpha} \right)$.
\end{theorem}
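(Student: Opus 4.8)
The plan is to reduce the joint ICQ-regularised improvement step to a sequence of per-agent subproblems by combining the auto-regressive factorisation $\pi(\boldsymbol{a}\mid\boldsymbol{\tau})=\prod_{j=1}^{n}\pi^{i_j}(a^{i_j}\mid\boldsymbol{\tau},\mathbf{a}^{i_{1:j-1}})$ with the multi-agent advantage decomposition theorem. I would begin from the closed-form KL-constrained optimum recalled in the ICQ background: the maximiser of $\mathbb{E}_{\boldsymbol{a}\sim\pi}[Q(\boldsymbol{\tau},\boldsymbol{a})]$ subject to $D_{KL}(\pi\parallel\mu)[\boldsymbol{\tau}]\le\epsilon$ is $\pi^{*}\propto\mu\exp(Q/\alpha)$. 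Since $Q=A+V$ and $V(\boldsymbol{\tau})$ is constant in $\boldsymbol{a}$, the factor $\exp(V/\alpha)$ is absorbed into the partition function, so $\pi^{*}$ is unchanged when $Q$ is replaced by the advantage $A=Q-V$; working with advantages throughout is what makes the subsequent decomposition possible.

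Two structural facts drive the argument. First, a partial-sum version of the decomposition: for any prefix $i_{1:j}$, $A^{i_{1:j}}(\boldsymbol{\tau},\mathbf{a}^{i_{1:j}})=\sum_{l=1}^{j}A^{i_l}(\boldsymbol{\tau},\mathbf{a}^{i_{1:l-1}},a^{i_l})$. This is immediate by telescoping the definition of the multi-agent advantage, since $A^{i_l}=Q^{i_{1:l}}-Q^{i_{1:l-1}}$ and the sum collapses to $Q^{i_{1:j}}-V=A^{i_{1:j}}$; the stated decomposition theorem is the special case $j=n$. In particular $A^{i_{1:j}}=A^{i_{1:j-1}}+A^{i_j}$, where $A^{i_{1:j-1}}$ does not depend on $a^{i_j}$. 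Second, the chain rule of relative entropy for auto-regressive distributions, $D_{KL}(\pi\parallel\mu)[\boldsymbol{\tau}]=\sum_{j=1}^{n}\mathbb{E}_{\mathbf{a}^{i_{1:j-1}}\sim\pi}\big[D_{KL}(\pi^{i_j}\parallel\mu^{i_j})[\boldsymbol{\tau},\mathbf{a}^{i_{1:j-1}}]\big]$, which splits the single joint constraint into a sum of per-agent conditional constraints.

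Substituting $A=\sum_l A^{i_l}$ and this chain rule into the Lagrangian $\mathbb{E}_{\boldsymbol{a}\sim\pi}[A]-\alpha\big(D_{KL}(\pi\parallel\mu)-\epsilon\big)$ writes it as a sum over agents in which the conditional factor $\pi^{i_j}(\cdot\mid\boldsymbol{\tau},\mathbf{a}^{i_{1:j-1}})$ directly controls the term $\mathbb{E}_{\mathbf{a}^{i_{1:j-1}}\sim\pi}\big[\mathbb{E}_{a^{i_j}\sim\pi^{i_j}}[A^{i_j}]-\alpha D_{KL}(\pi^{i_j}\parallel\mu^{i_j})\big]$. Because the factors interact only through expectations over the actions of preceding agents, they can be optimised sequentially for $j=1,\dots,n$, each conditioned on the realised actions of the earlier agents --- exactly the sequential scheme claimed. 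For each fixed context $(\boldsymbol{\tau},\mathbf{a}^{i_{1:j-1}})$ the inner problem is precisely the scalar ICQ problem, with solution $\pi^{i_j}_{*}\propto\mu^{i_j}\exp(A^{i_j}/\alpha)$; using $A^{i_{1:j}}=A^{i_{1:j-1}}+A^{i_j}$ and cancelling the $a^{i_j}$-independent factor, this is equivalently $\pi^{i_j}_{*}(a^{i_j}\mid\boldsymbol{\tau},\mathbf{a}^{i_{1:j-1}})=\frac{1}{\tilde z^{i_j}}\mu^{i_j}\exp(A^{i_{1:j}}/\alpha)$, with $\tilde z^{i_j}=\sum_{a^{i_j}}\mu^{i_j}\exp(A^{i_{1:j}}/\alpha)$.

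Finally I would project this target onto the parametric family as in the ICQ policy loss, minimising $\mathbb{E}_{\boldsymbol{\tau}}[D_{KL}(\pi^{i_j}_{*}\parallel\pi^{i_j}_{\theta})]$; discarding the $\theta$-independent entropy term leaves the forward cross-entropy $-\mathbb{E}_{a^{i_j}\sim\pi^{i_j}_{*}}[\log\pi^{i_j}_{\theta}]$, and an importance-sampling change of measure from $\pi^{i_j}_{*}$ to the data policy $\mu^{i_j}$ turns the sampling weight into $\frac{1}{\tilde z^{i_j}}\exp(A^{i_{1:j}}/\alpha)$, recovering the displayed objective. To obtain the running-product normaliser $Z^{i_{1:j}}=\prod_{l=1}^{j}\tilde z^{i_l}$ of the theorem, I would identify it as the product of per-agent normalisers converting the unnormalised product $\prod_{l\le j}\mu^{i_l}\exp(A^{i_{1:l}}/\alpha)$ into the normalised sequential target $\prod_{l\le j}\pi^{i_l}_{*}$, and note that the extra factor $Z^{i_{1:j-1}}$ relative to $\tilde z^{i_j}$ depends only on $(\boldsymbol{\tau},\mathbf{a}^{i_{1:j-1}})$ and not on $a^{i_j}$ or $\theta$, so it merely reweights contexts and leaves the per-context minimiser --- hence the argmax --- unchanged. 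I expect this normaliser to be the main obstacle: because each $\tilde z^{i_l}$ depends on the realised preceding actions, the running product does not collapse to the joint partition function $Z(\boldsymbol{\tau})$, and making rigorous that these context-dependent factors preserve both the sequential optimum and the interpretation as a genuine forward-KL projection is the delicate step.
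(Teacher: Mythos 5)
Your derivation reaches the paper's objective, but by a genuinely different route, and it is correct to the same standard as the paper's own argument. The paper never revisits the constrained optimisation problem: it starts directly from the joint ICQ projection loss $\mathcal{J}_{\boldsymbol{\pi}} = \mathbb{E}_{\boldsymbol{\tau},\mathbf{a}\sim \mathcal{B}}\bigl[-\tfrac{1}{Z(\boldsymbol{\tau})}\log\pi(\mathbf{a}\mid\boldsymbol{\tau})\exp\bigl(A(\boldsymbol{\tau},\mathbf{a})/\alpha\bigr)\bigr]$, substitutes the autoregressive factorisation of $\log\pi$ and the advantage decomposition $A=\sum_l A^{i_l}$, and then, for each $j$, splits the exponential into a prefix part ($l\le j$) and a suffix part ($k>j$), factorising the expectation into a product of two expectations and the normaliser as $Z = Z^{i_{1:j}}\cdot Z^{i_{j+1:n}}$, so that the suffix factor becomes a multiplicative constant that can be dropped. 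You instead go one step further back: you re-derive ICQ at the per-agent level from the Lagrangian of the KL-constrained problem, using the chain rule of relative entropy (a lemma the paper does not invoke) together with the prefix version of the advantage decomposition, obtain the per-context closed form $\pi^{i_j}_*\propto\mu^{i_j}\exp(A^{i_j}/\alpha)$, and then recover the displayed loss as a forward-KL projection followed by an importance-sampling change of measure to $\mu$. Your route buys an explicit characterisation of what the loss is a projection onto and makes visible exactly where each ICQ ingredient enters; the paper's route is shorter and purely algebraic. It is worth noting that the delicate point you flag at the end---each $\tilde z^{i_l}$ depends on the realised prefix actions, so the running product is not the joint partition function and downstream terms are not genuinely constant---is precisely the step on which the paper's own proof leans: its identities $\mathbb{E}[fg]=\mathbb{E}[f]\,\mathbb{E}[g]$ and $Z = Z^{i_{1:j}}Z^{i_{j+1:n}}$ hold only under the convention that the suffix integrates to one conditionally on the prefix, which mirrors the weakness in your argument that greedy per-context optimisation of the $j$-th factor ignores the appearance of $\pi^{i_j}$ in the outer $\pi$-weighted expectations of terms $j+1,\dots,n$ (the joint optimum's conditionals carry backward ``messages'' that both derivations discard). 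In short: both proofs carry the same caveat; yours states it explicitly, while the paper's buries it in the factorisation step.
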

To arrive at a sequential SARSA-like algorithm similar to ICQ, we update the critic by sampling target actions from the dataset and update the Q-value function with implicit importance weights as 
\begin{align*}
    Q_{k+1}^{i_j} = \underset{Q^{i_j}}{\mathrm{argmin}}\, \mathbb{E}_{ \mathcal{B}} \left[\left(r +\gamma \frac{\exp\left(\frac{Q_{\phi^-}^{i_j}(\boldsymbol{\tau}',\mathbf{a}^{\prime,i_{1:j}})}{\alpha}\right)}{Z(\boldsymbol{\tau}')}Q_{\phi^-}^{i_j}(\boldsymbol{\tau}^\prime,\mathbf{a}^{\prime,{i_{1:j}}})-Q_{\phi}^{i_j}(\boldsymbol{\tau}, \mathbf{a}^{i_{1:j}})\right)^2\right].
\end{align*}
Finally, the centralised advantage estimate in the multi-agent policy gradient is susceptible to high variance. In particular, \cite{kuba2021settling} provides an upper bound on the difference in gradient variance between independent and centralised learning when using the standard V-value function as a baseline as  $(n-1) \frac{(\epsilon B_i)^2}{1-\gamma^2}$, where $B_i = \sup_{\boldsymbol{\tau}, \mathbf{a}} || \nabla_{\theta^i} \log \pi_\theta^i (\hat{a}^i|\boldsymbol{\tau}) ||$, $\epsilon_i = \sup_{\boldsymbol{\tau}, \mathbf{a}^{-i}, a^i} |A^i (\boldsymbol{\tau}, \mathbf{a}^{-i}, a^i)|$, and $\epsilon = \max_i \epsilon_i$. This bound grows linearly in the number of agents. We can arrive at a better bound that removes the $(n-1)$ term by employing a counterfactual baseline \citep{kuba2021settling} as used in \citet{foerster2018coma} such that
\begin{align*}
    A^{i_{1:j}}(\boldsymbol{\tau}, \mathbf{a}^{i_{1:j}})=\sum_{m=1}^{j} \left[ Q(\boldsymbol{\tau}, \mathbf{a}^{i_{1:m}}) - \sum_{a^{i_m}} \pi^{i_m}(a^{ i_m} \mid \boldsymbol{\tau}, \mathbf{a}^{i_{1:m-1}}) Q(\boldsymbol{\tau}, \mathbf{a}^{i_{1:m}}) \right].
\end{align*}
This completes the sequential updating scheme for Oryx. A full algorithm description is provided in \autoref{alg: Oryx}. Next, we describe the architectural details allowing Oryx to model long-term dependencies across sampled trajectories from the dataset.

\begin{algorithm}[h]
\caption{Oryx's sequential updating scheme with autoregressive ICQ regularisation}
\label{alg: Oryx}
\small
\begin{algorithmic}[1]
\State Initialise the joint policy and critic network parameters $\theta$, $\phi$.
\For{$k = 0, 1, \ldots$}
    \State Sample a mini-batch $\mathcal{B} = \{(\boldsymbol{\tau}, \mathbf{a}^{i_{1:j}},\boldsymbol{\tau}', \mathbf{a}^{\prime, i_{1:j}})\}$ of trajectories from the offline dataset $\mathcal{D}$.
    \State Draw a permutation $i_{1:n}$ of agents at random.
    \For{$j = 1 : n$}
        \State \text{Update the critic:}
        \[
        Q_{k+1}^{i_j} = \underset{Q^{i_j}}{\mathrm{argmin}}\, \mathbb{E}_{ \mathcal{B}} \left[\left(r +\gamma \frac{\exp\left(\frac{Q_{\phi^-}^{i_j}(\boldsymbol{\tau}',\mathbf{a}^{\prime,i_{1:j}})}{\alpha}\right)}{Z(\boldsymbol{\tau}')}Q_{\phi^-}^{i_j}(\boldsymbol{\tau}^\prime,\mathbf{a}^{\prime,{i_{1:j}}})-Q_{\phi}^{i_j}(\boldsymbol{\tau}, \mathbf{a}^{i_{1:j}})\right)^2\right]
        \]
        \State \text{Calculate the advantage:}
        \[
        A^{i_{1:j}}(\boldsymbol{\tau}, \mathbf{a}^{i_{1:j}})=\sum_{m=1}^{j} \left[ Q(\boldsymbol{\tau}, \mathbf{a}^{i_{1:m}}) - \sum_{a^{i_m}} \pi^{i_m}(a^{ i_m} \mid \boldsymbol{\tau}, \mathbf{a}^{i_{1:m-1}}) Q(\boldsymbol{\tau}, \mathbf{a}^{i_{1:m}}) \right]
        \]
        \State \text{Update the policy:}
        \[
        \pi_{k+1}^{i_j} = \underset{\pi^{i_j}}{\mathrm{argmin}}\,\mathbb{E}_{ \mathcal{B}} \left[-\frac{1}{Z(\boldsymbol{\tau})} \log(\pi^{i_j}(a^{ i_j} \mid \boldsymbol{\tau}, \mathbf{a}^{i_{1:j-1}})) \exp\left(\frac{A^{i_{1:j}}(\boldsymbol{\tau}, \mathbf{a}^{i_{1:j}})}{\alpha} \right)\right]
        \]
    \EndFor
\EndFor
\end{algorithmic}
\end{algorithm}

\section{Results}

In this section, we conduct detailed empirical evaluations to substantiate the core contributions of our proposed algorithm, Oryx. We begin by rigorously validating its key design components: (i) sequential action selection for agent coordination, (ii) a memory mechanism for temporal coherence, and (iii) autoregressive ICQ for stable offline training. To assess scalability, we subject Oryx to a demanding multi-agent setting involving up to 50 agents. Specifically, we use the Connector environment~\citep{jumanji}, which progressively increases coordination complexity as the number of agents grows. Finally, we perform an extensive and diverse benchmarking study across more than 65 datasets from prominent MARL benchmarks, covering a wide range of tasks in SMAC~\citep{samvelyan2019starcraftmultiagentchallenge}, MAMuJoCo~\citep{peng2021facmac}, and RWARE~\citep{papoudakis2021benchmarking}.

\begin{figure}
    \centering
    \hspace*{\fill}
    \begin{subfigure}[b]{0.2\textwidth}
        \centering
        \includegraphics[width=\textwidth]{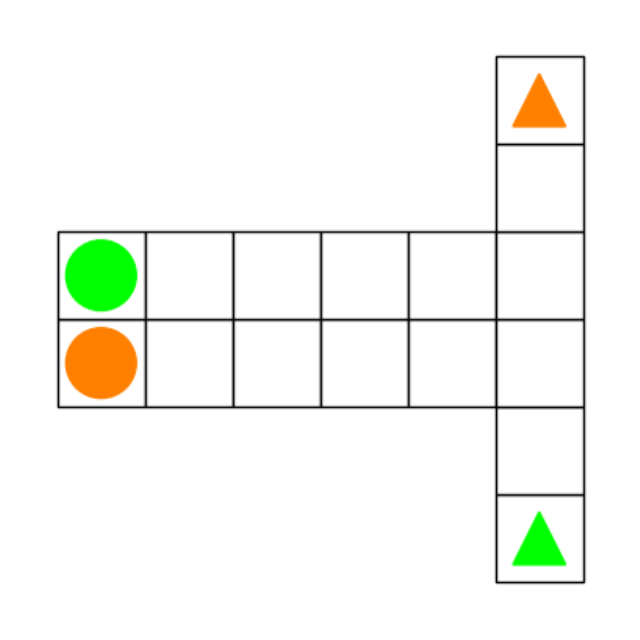}
        \caption{T-Maze}
        \label{fig:image1}
    \end{subfigure}
    \hspace{1cm}
    \begin{subfigure}[b]{0.5\textwidth}
        \centering
        \begin{adjustbox}{max width=\linewidth}
        \begin{tabular}{l|cc}
            \toprule
            & \multicolumn{2}{c}{\textbf{T-MAZE}} \\
            \midrule
            \textbf{Algorithm} & Replay & Expert \\
            \midrule
            \textbf{I-ICQ} & 0.0 $\pm$ 0.0 & 0.0 $\pm$ 0.0 \\
            \textbf{MAICQ} & 0.0 $\pm$ 0.0 & 0.0 $\pm$ 0.0 \\
            \midrule
            \textbf{Oryx - w/o Auto-Regressive Actions} & 0.0 $\pm$ 0.0 & 0.0 $\pm$ 0.0 \\
            \textbf{Oryx - w/o Memory} & 0.58 $\pm$ 0.04 & 0.63 $\pm$ 0.04 \\
            \textbf{Oryx - w/o ICQ} & 0.0 $\pm$ 0.0 & 0.0 $\pm$ 0.0 \\
            \midrule
            \textbf{Oryx} & \textbf{0.99 $\pm$ 0.01} & \textbf{0.94 $\pm$ 0.03}  \\
            \bottomrule
        \end{tabular}
        \end{adjustbox}
        \caption{Performance across baselines and ablations}
    \end{subfigure}
    \hspace{1cm}
    \caption{\textit{Evaluating long horizon coordination.}  To issolate the importance of the different components of Oryx a minimal two-agent environment, T-Maze was designed. In the environment the target states are revealed only at the first timestep, requiring agents to retain goal information throughout the episode and carefully coordinate at the end. \textbf{Oryx successfully solves the task only when all components are present, while baseline methods fail to perform across both the replay and expert datasets.}}
    \label{fig:tmaze}
\end{figure}

\subsection{Validating Oryx's core mechanisms} \label{subsec: validate Oryx}
Inspired by \citet{bsuite}, we designed a T-Maze environment (see \autoref{fig:tmaze}) to specifically isolate a long-horizon multi-agent coordination challenge. On the first timestep, two agents each independently select a colour ("green" or "orange"). On the subsequent timestep, they spawn randomly at the bottom of the maze and briefly observe their choice of goal action and the goal locations (e.g., orange-left, green-right). Without further observing their goal, agents must navigate to their respective targets, manoeuvring at the junction to avoid collision and delay. Success requires: (i) selecting different colours initially for effective coordination, (ii) retaining memory of their goal choice action, and (iii) efficiently manoeuvring around one another. Agents receive a team reward of 1 if both agents reach the goal.
We generated two datasets: a \texttt{mixed} dataset containing primarily unsuccessful trajectories and a smaller number of successful examples, and an \texttt{expert} dataset consisting solely of successful trajectories. Dataset statistics are detailed in the appendix following \citet{Formanek2024PuttingDA}. We evaluated several baselines: i) fully independent ICQ learners (\textbf{I-ICQ}), ii) the CTDE variant with mixing networks (\textbf{MAICQ}~\citep{Yang2021BelieveWY}), and iii) targeted Oryx ablations disabling autoregressive action selection, memory, and offline sequential ICQ regularisation individually to isolate their contributions.

As shown in \autoref{fig:tmaze}, Oryx consistently achieves optimal performance with both datasets, while baseline ICQ variants fail regardless of value decomposition. The ablation results confirm that each of Oryx’s core components—sequential action selection, memory, and ICQ-based offline regularisation—is individually crucial for enabling effective long-horizon coordination among agents.

\subsection{Testing coordination in complex many-agent settings}
\label{testing-scale}
Having validated the importance of Oryx's core components in a smaller-scale setting, we now stress-test the algorithm in significantly larger and more challenging many-agent coordination scenarios. To effectively evaluate this, we select the Connector environment from \citet{jumanji}, which inherently becomes more complex as agent density increases. It is important to highlight that naively increasing agent numbers does not necessarily enhance task complexity; it may even simplify certain problems where coordination is less critical. Despite its popularity, many SMAC scenarios tend to become easier as the number of agents increase (for example, reported performance often takes on the following task order, \texttt{5m\_vs\_6m} $>$ \texttt{8m\_vs\_9m} $>$ \texttt{10m\_vs\_11m}~\citep{Wen2022MultiAgentRL,Yu2021TheSE,hu2023rethinking}). In Connector, agents spawn randomly on a fixed-size grid and are each assigned a random target location. Agents must navigate to their targets, leaving impassable trails behind them that can obstruct other agents. The necessity for careful coordination sharply increases as agent density grows, making Connector ideal for testing Oryx.

Datasets for Connector were generated by recording replay data from online training sessions using Sable~\citep{Mahjoub2024SableAP}. Due to the substantial volume of data generated by the online systems (20M timesteps), we applied random uniform subsampling on each dataset to select 1M timesteps. Statistical summaries of these datasets are provided in the appendix. For comparison, we again use MAICQ \citep{Yang2021BelieveWY} as our baseline. MAICQ is particularly valuable as a comparative algorithm because it incorporates several widely used components that differ notably from Oryx, specifically: (i) RNN-based memory, (ii) global state-conditioned value decomposition, and (iii) the original non-autoregressive ICQ loss. We normalise the results as in \citet{Fu2020D4RLDF} $norm\_score=\frac{score - random\_score}{expert\_score - random\_score}$, where $random\_score$ is the average score achieved by agents taking random actions and the $expert\_score$ is the performance of the online system at the end of training. 

Our results indicate that at lower agent counts, performance differences between Oryx and MAICQ are modest. However, from around 23 agents, Oryx significantly outperforms MAICQ, achieving near-expert performance compared to only 25\% of expert performance by MAICQ. Interestingly, between 23 and 30 agents, MAICQ’s performance temporarily improves. However, this is likely due to the fact that we had to marginally increase the grid size at this point, making the environment slightly less dense (see \autoref{fig:connector}). Nonetheless, as agent count grows from 30 to 50 (with grid size held constant), coordination complexity dramatically escalates, clearly underscoring Oryx’s superior capability to manage increasingly challenging coordination demands. 

\begin{figure}
    \centering
    \begin{subfigure}[b]{0.26\textwidth}
        \centering
        \includegraphics[width=\textwidth]{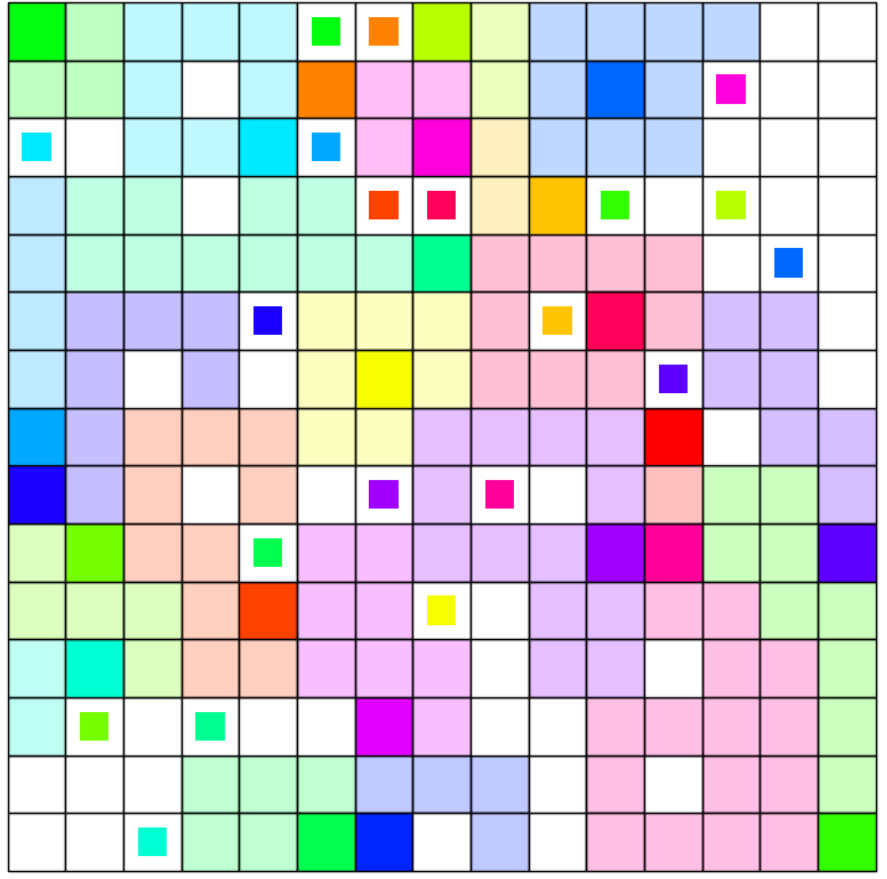}
        \caption{23-Agent Connector}
        \label{fig:image1}
    \end{subfigure}
    \hspace{1cm}
    \begin{subfigure}[b]{0.47\textwidth}
        \centering
        \includegraphics[width=\textwidth]{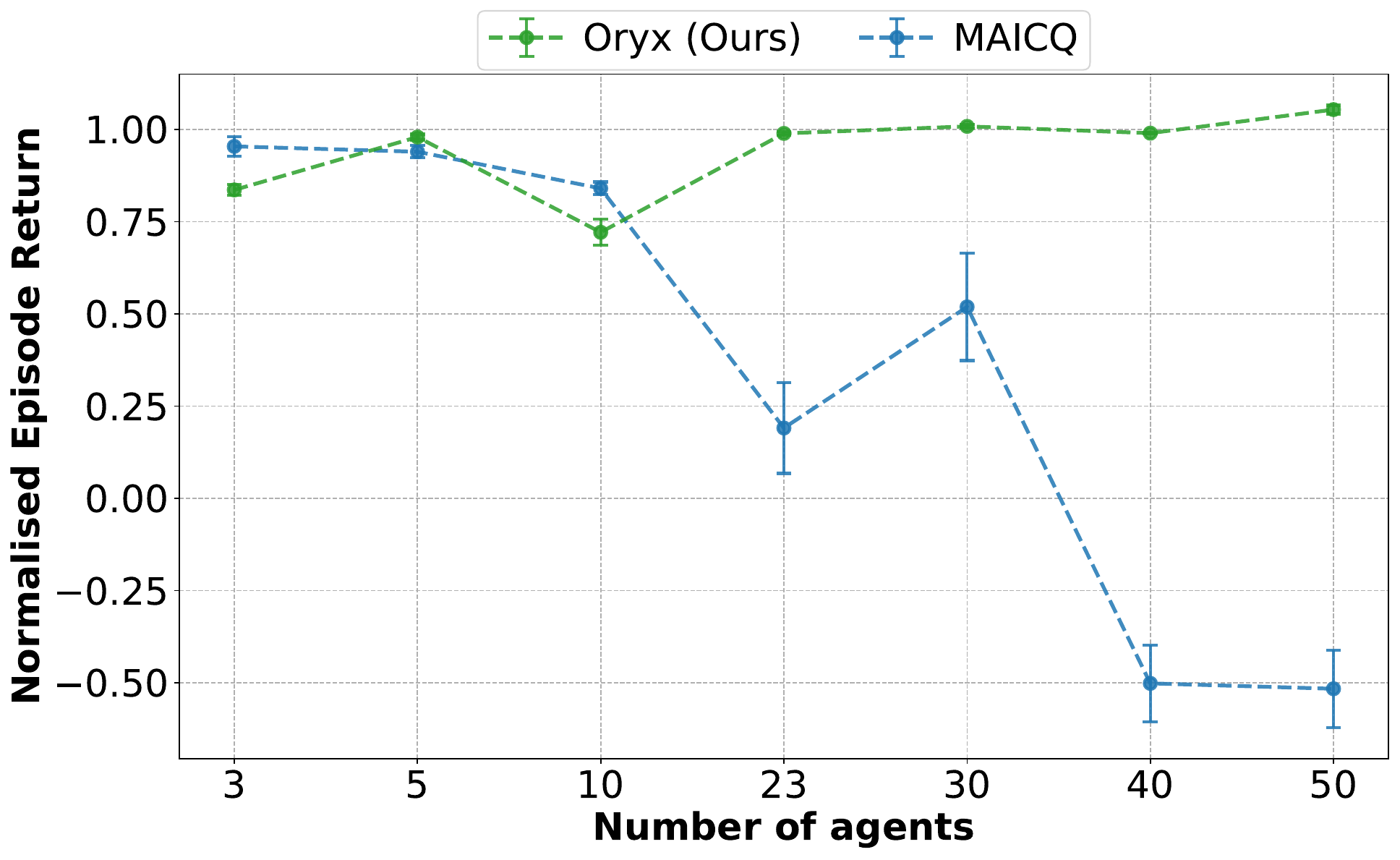}
        \caption{Results on Connector}
        \label{fig:image2}
    \end{subfigure}
    \caption{\emph{Evaluating Oryx on many-agent settings.} We compare Oryx, with its autoregressive ICQ loss and sequence model architecture, to MAICQ which is a non-autoregressive CTDE algorithm. The two algorithms are trained on datasets from Connector~\citep{jumanji} scenarios with increasing numbers of agents. \textbf{While the performance of MAICQ dramatically degrades on scenarios with large numbers of agents, Oryx's performance remains robust.}}
    \label{fig:connector}
\end{figure}

\subsection{Demonstrating state-of-the-art performance on existing offline MARL datasets}

\begin{figure}[t]
  \centering
  \begin{subfigure}[t]{\textwidth}
    \centering
    \hspace{1.5em}\includegraphics[width=0.93\textwidth]{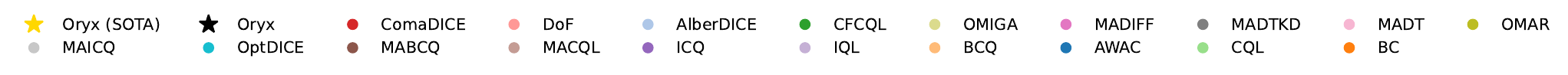} \\[-0.5em]
    \includegraphics[width=\textwidth]{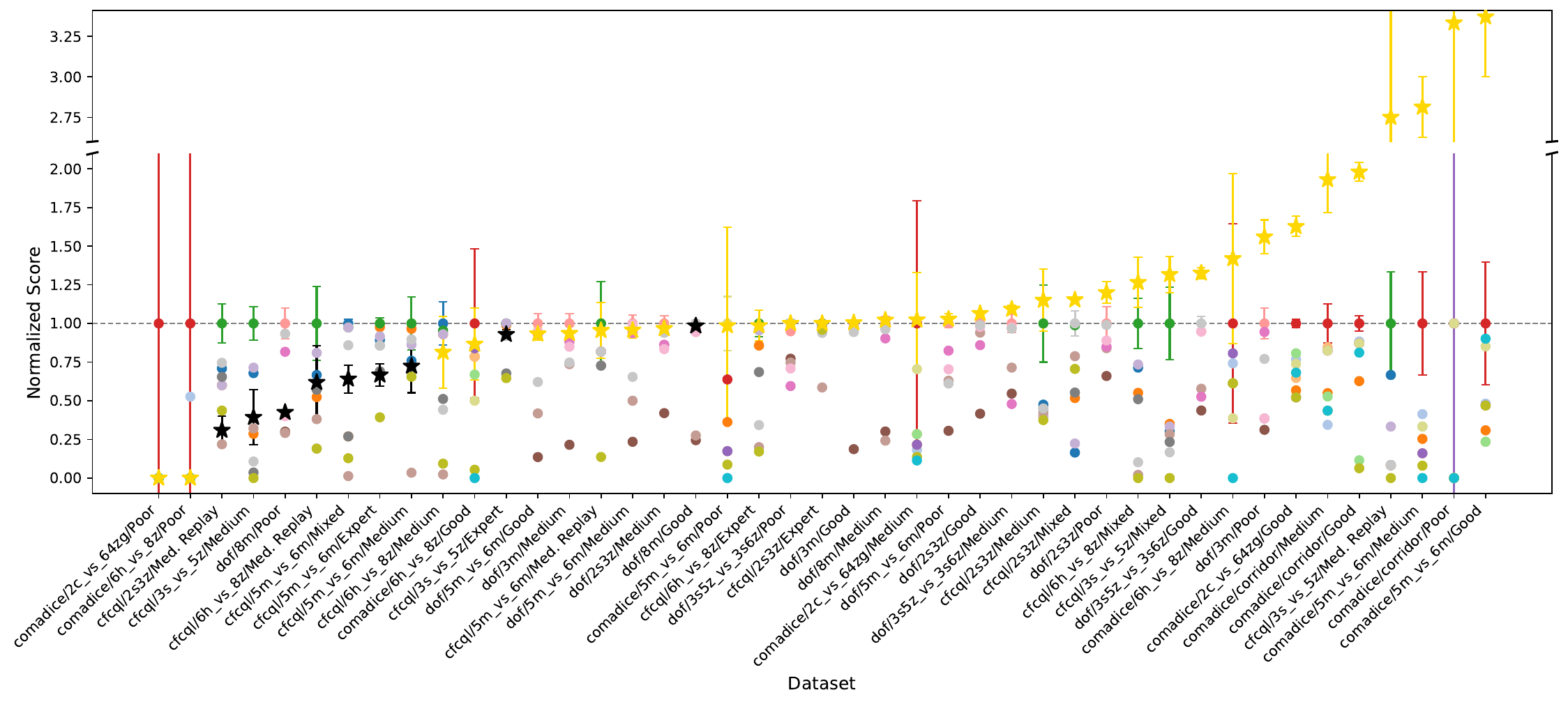}
    \caption{SMAC}
    \label{fig:top}
  \end{subfigure}
  
  \vspace{0.5em} 
  
  \begin{subfigure}[t]{0.58\textwidth}
    \centering
    \includegraphics[width=1\textwidth]{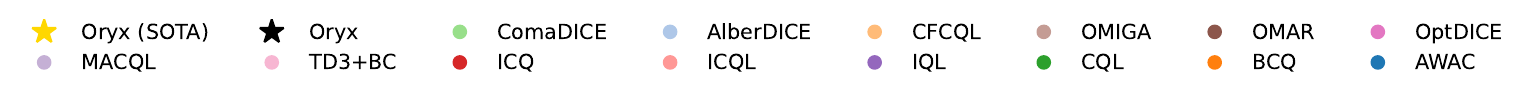} \\
    \hspace{-4em}\includegraphics[width=\linewidth]{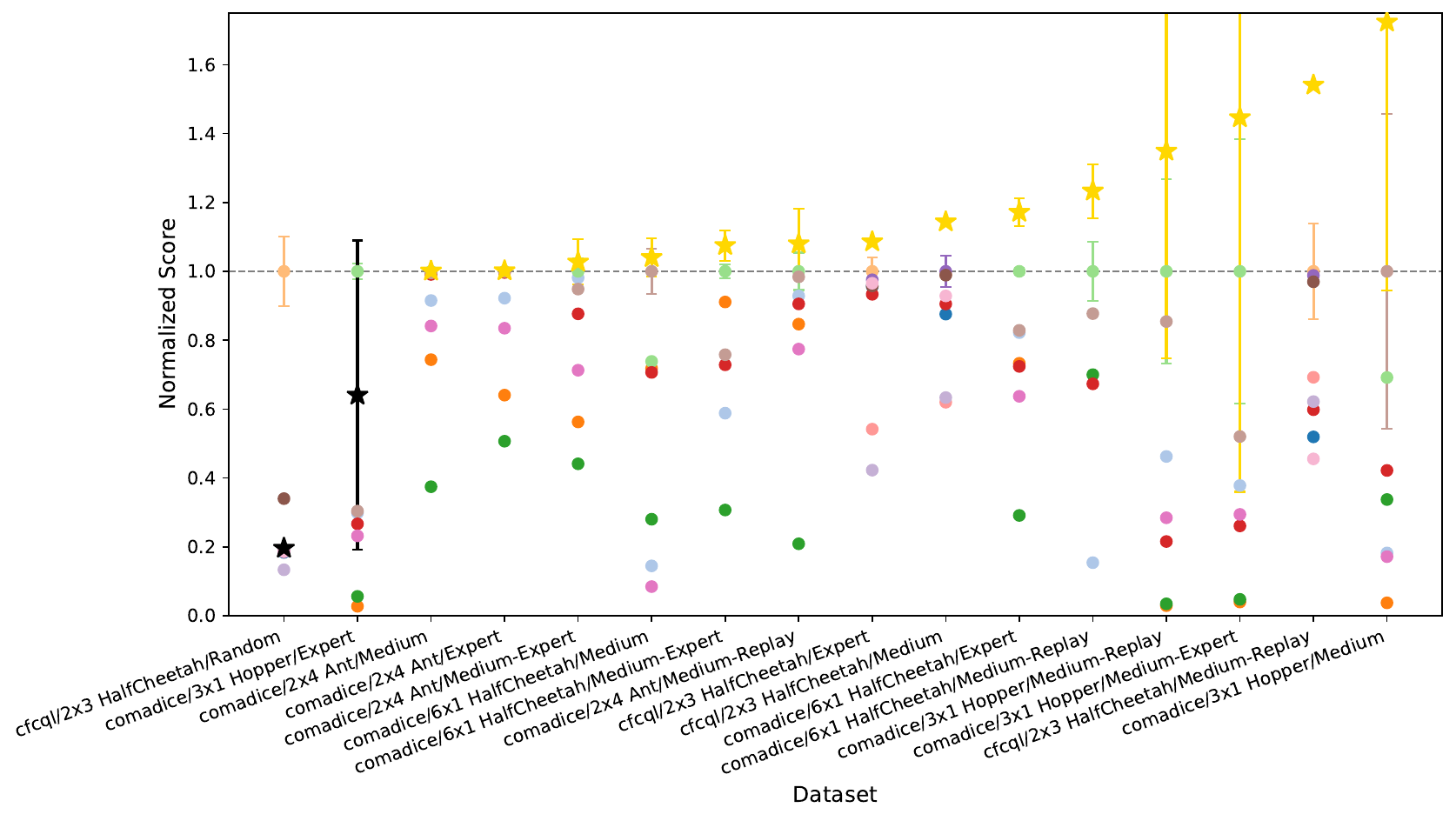}
    \caption{MAMuJoCo}
    \label{fig:bottom-left}
  \end{subfigure}%
  \hfill%
  \begin{subfigure}[t]{0.42\textwidth}
    \centering
    \includegraphics[width=0.8\textwidth]{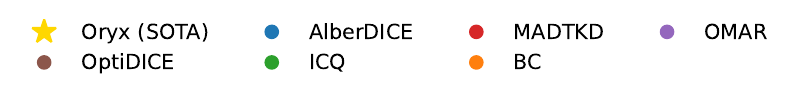} \\[-0.5em]
    \vspace{0.3em}
    \hspace{-2.5em}\includegraphics[width=\linewidth]{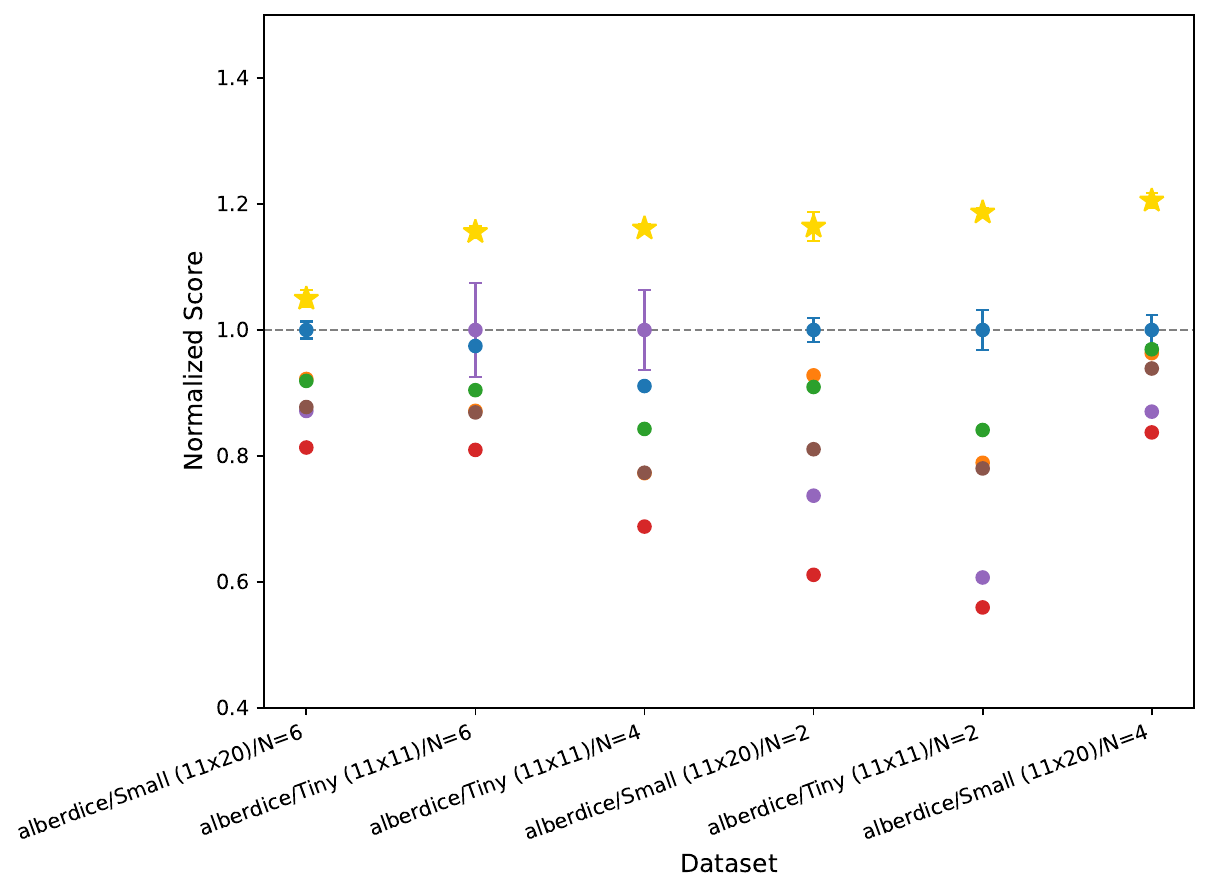}
    \vspace{0.005em}
    \caption{RWARE}
    \label{fig:bottom-right}
  \end{subfigure}
  
\caption{\textit{Performance of Oryx across diverse benchmark datasets from prior literature.} Scores are normalised relative to the current state-of-the-art, with values above 1 indicating that Oryx surpasses previous best-known results. Unnormalized scores are provided in the appendix. \textbf{Gold stars indicate instances where Oryx matches or exceeds state-of-the-art performance, while black stars denote otherwise.} 
}
\label{fig:three-images}
\end{figure}

Finally, we evaluate Oryx against a wide range of current state-of-the-art offline MARL algorithms across various widely recognised benchmark datasets. 
The datasets come from various original works, including \citet{formanek2023ogmarl,Pan2021PlanBA,Shao2023CounterfactualCQ,Wang2023OfflineMR,Matsunaga2023AlberDICEAO} and were obtained via OG-MARL \citep{Formanek2024PuttingDA}, a public datasets repository for offline MARL.
We compare Oryx against the latest published performances on each dataset, including reported results from \citet{Matsunaga2023AlberDICEAO, Shao2023CounterfactualCQ, Bui2024ComaDICEOC, li2025dof}. We follow a similar methodology to that of \citet{Formanek2024DispellingTM}, where we trained our algorithm for a fixed number of updates on each dataset and reported the mean episode return at the end of training over 320 rollouts. We repeated this procedure 10 times with different random seeds. We summarise our results in \autoref{fig:three-images} and provide detailed tabular data in the appendix, i.e.\ mean and standard deviations of episode returns across all random seeds for each dataset. We perform a simple heteroscedastic, two-sided t-test with a 95\% confidence interval for testing statistical significance, following \citet{papoudakis2021benchmarking} and \citet{Formanek2024DispellingTM}.

\textbf{SMAC} is the most widely used environment in the offline MARL literature. However, as \citet{Formanek2024PuttingDA} pointed out, different authors tend to not only use different datasets in their experiments but also entirely different scenarios. This makes comparing across works very challenging. As such, we tested Oryx across as many SMAC datasets that were available to us \citep{Meng2021OfflinePM, formanek2023ogmarl, Shao2023CounterfactualCQ} and compared its performance against the latest state-of-the-art result we could find in the literature \citep{Shao2023CounterfactualCQ,Bui2024ComaDICEOC,li2025dof}. In total, we tested on 43 datasets, spanning 9 unique scenarios (\texttt{2c\_vs\_64zg}, \texttt{3s\_vs\_5z}, \texttt{3m}, \texttt{2s3z}, \texttt{5m\_vs\_6m}, \texttt{6h\_vs\_8z}, \texttt{8m}, \texttt{3s5z\_vs\_3s6z} and \texttt{corridor}) . \textbf{In total, Oryx matched or surpased the current state-of-the-art on 34 of the datasets (79\%).}

\textbf{MAMuJoCo} is the most widely used multi-agent continuous control environment and second most popular source of datasets for testing offline MARL algorithms. We tested Oryx on datasets from \citet{Pan2021PlanBA} and \citet{Wang2023OfflineMR}. These span several scenarios with varying numbers of agents including \texttt{2x3 HalfCheetah}, \texttt{6x1 HalfCheetah}, \texttt{2x4 Ant} and \texttt{3x1 Hopper}. We compared our results to recent state-of-the-art results by \citet{Shao2023CounterfactualCQ} and \citet{Bui2024ComaDICEOC}. \textbf{On 14 out of the 16 datasets tested, Oryx matched or surpassed the current state-of-the-art}.

\textbf{RWARE}~\citep{papoudakis2021benchmarking} is a well-known, challenging MARL environment that requires long-horizon coordination. Agents must learn to split up and avoid overlapping in order to optimally cover the warehouse. Moreover, episodes are quite long and have a very sparse reward signal, spanning 500 timesteps as compared to less than 100 and a dense reward on most SMAC scenarios. \citet{Matsunaga2023AlberDICEAO} generated and shared a number of datasets on RWARE, with varying numbers of agents ($n \in [2,4,6]$) and warehouse sizes. \textbf{Oryx set a new highest score on all six of the available datasets, and in several cases improved the state-of-the-art by nearly 20\%.}

\subsection{Comparing auto-regressive sequence modeling architectures}

Oryx extends auregressive MARL sequence modeling into the offline MARL setting by leveraging Sable~\citep{Mahjoub2024SableAP} as its network architecture backbone. While prior sequence models such as MAT~\citep{Wen2022MultiAgentRL} have demonstrated strong performance in online settings, the decision to adopt Sable as the backbone for Oryx was motivated by its superior scalability, efficiency, and stability when trained across large populations of agents.

However, to further quantify the advantages of our design choices, we conducted an ablation comparing Oryx with an offline variant of MAT using identical training procedures and the same autoregressive ICQ loss. This experiment isolates the impact of choosing Sable over MAT for Oryx's sequence model component. Following the evaluation methodology proposed by \citet{gorsane2022}, we report aggregated metrics across all SMAC and RWARE datasets. To facilitate aggregation across scenarios with different expected episode returns, results were normalised by the highest episode return in each dataset.
The results in \autoref{tab:oryx_mat_comparison} demonstrate that while MAT+ICQ is a strong baseline, Oryx still ourperforms it across all metrics and environments, validating our design decision to build on the Sable network architecture instead using MAT.

\begin{table}[t]
\centering
\caption{\textit{Comparing Oryx and MAT+ICQ.} In order to isolate the effect of using the Sable network architecture as the backbone of the Oryx network, we compare it to an offline varient of MAT that uses the same autoregressive ICQ loss as Oryx (MAT+ICQ). We report several aggregated metrics with 95\% stratified bootstrap confidence intervals~\citep{agarwal2021deep} across all SMAC and RWARE datasets. To Facilitate aggregation across scenarios with different episode returns, scores were first normalised by the highest return in each dataset respectivly. \textbf{We see from the results that on both environments the mean, median, interquartile mean (IQM), and optimality gap of Oryx is superior to MAT+ICQ}.
\vspace{1em}
\label{seq-modeling-comp}}
\label{tab:oryx_mat_comparison}
\begin{adjustbox}{max width=0.8\textwidth}
\begin{tabular}{l|cc|cc}
\toprule
\multicolumn{1}{c|}{} & \multicolumn{2}{c|}{\textbf{SMAC}} & \multicolumn{2}{c}{\textbf{RWARE}} \\
\midrule
 & \textbf{MAT+ICQ} & \textbf{Oryx} & \textbf{MAT+ICQ} & \textbf{Oryx} \\
\midrule
Median $\uparrow$ & 0.71 [0.62, 0.74] & \textbf{0.91 [0.88, 0.92]} & 0.85 [0.84, 0.86] & \textbf{0.89 [0.88, 0.90]} \\
IQM $\uparrow$ & 0.67 [0.66, 0.69] &\textbf{ 0.87 [0.86, 0.88]} & 0.85 [0.84, 0.86] & \textbf{0.89 [0.89, 0.90]} \\
Mean $\uparrow$ & 0.63 [0.62, 0.64] & \textbf{0.77 [0.76, 0.79]} & 0.84 [0.83, 0.84] &\textbf{ 0.89 [0.88, 0.90]} \\
Optimality Gap $\downarrow$ & 0.38 [0.36, 0.39] & \textbf{0.23 [0.22, 0.25]} & 0.16 [0.16, 0.17] & \textbf{0.11 [0.10, 0.12]} \\
\bottomrule
\end{tabular}
\end{adjustbox}
\end{table}

\section{Related Work}
\label{sec:related_work}
Finally, we provide an overview of prior literature addressing the key challenges of offline MARL and works that approach MARL as a sequence modeling problem, highlighting connections and distinctions relative to our contributions.

\textbf{Offline MARL.} Early works such as \citet{Jiang2021OfflineDM} and \citet{Yang2021BelieveWY} introduced methods to mitigate extrapolation errors through constrained Q-value estimation within the training distribution. \citet{Pan2021PlanBA} combined first-order gradient methods with zeroth-order optimisation to guide policies toward high-value actions, with further refinements provided by \citet{Shao2023CounterfactualCQ}, which applied conservative regularisation using per-agent counterfactual reasoning, and \citet{Wang2023OfflineMR}, employing a global-to-local value decomposition approach. Distributional constraints have been another promising direction, notably in works like \citet{Matsunaga2023AlberDICEAO} and \citet{Bui2024ComaDICEOC}. Numerous additional works have also tackled extrapolation error, coordination issues, and offline stability~\citep{zhang2023discovering, wu2023conservative, Eldeeb2024ConservativeAR, Liu2024OfflineMR, barde2024modelbasedsolutionofflinemultiagent, Liu2025LearningGS, Zhou_2025}. Our approach builds directly upon these insights, specifically extending the early ICQ framework \citep{Yang2021BelieveWY} to enhance learning stability from offline trajectories.

Moreover, theoretical advancements have enhanced understanding of the guarantees and limitations inherent to offline MARL~\citep{cui2022provablyefficientofflinemultiagent, cui2022offlinetwoplayerzerosummarkov, zhong2022pessimisticminimaxvalueiteration, zhang2023offlinelearningmarkovgames, xiong2023nearlyminimaxoptimaloffline, wu2023conservative}. Complementary studies investigated opponent modeling in offline contexts~\citep{jing2024towards}, and explorations into offline-to-online transitions~\citep{zhong2024offlinetoonlinemultiagentreinforcementlearning,formanek2023reduce} reveal potential pathways to bridge static offline training with dynamic online adaptation.

\textbf{MARL as a Sequence Modeling Problem.} Capturing long-term dependencies and joint-agent behaviour is critical, particularly in partially observable scenarios and tasks with extended horizons. Prior works such as \citet{Meng2021OfflinePM} and \citet{tseng2022offline} addressed these challenges through Transformer-based architectures, effectively modeling trajectory data in offline MARL settings. Attention-based and diffusion-based strategies have also been explored~\citep{Zhu2023MADiffOM, qiao2025offline, li2025dof, fu2025ins}. Online MARL research further contributed with influential works like by \citet{Wen2022MultiAgentRL}, which introduced a transformer-based autoregressive action selection mechanism, alongside newer architectures aimed at better scalability with number of agents~\citep{daniel2025multiagentreinforcementlearningselective,Mahjoub2024SableAP}.

\section{Conclusion}

In this work, we introduced Oryx, a novel algorithm explicitly designed to address the critical challenges of coordination amongst many agents in offline MARL. We derived a sequential policy updating scheme that leverages implicit constraint Q-learning \citep{Yang2021BelieveWY} and the advantage decomposition theorem~\citep{kuba2021settling}. By integrating this sequential ICQ-based updating scheme with a modified version of the Sable network~\citep{Mahjoub2024SableAP}, Oryx effectively mitigates two fundamental problems in offline MARL: extrapolation error and miscoordination. 
Our extensive empirical evaluation across diverse benchmarks—including SMAC, RWARE, Multi-Agent MuJoCo, and Connector—demonstrates that Oryx consistently achieves state-of-the-art results. Notably, Oryx excels in scenarios characterised by high agent densities and complex coordination requirements, distinguishing it from prior approaches for offline multi-agent learning. To help accelerate research in this direction, we make all our datasets on Connector, with up to 50 agents, openly accessible to the community. 

\paragraph{Limitations and future work}
While Oryx demonstrates robust performance across diverse research benchmarks, its evaluation is naturally limited compared to the true complexity of large-scale, real-world industrial settings. Consequently, important future work includes extending Oryx to hybrid offline-online settings and evaluating its effectiveness in broader, more varied domains, particularly real-world applications.
Furthermore, our work introduces autoregressive policies to offline MARL, demonstrating the significant promise of utlising such policies in other architectures and setups. Future research could extend existing and novel offline MARL methods to utilise autoregressive policies~\citep{fu2022revisiting} and sequence models. Integrating diverse offline learning techniques into this paradigm may lead to the discovery of even more effective sequence models for multi-agent coordination.

\section*{Acknowledgments}
We would like to thank Louise Beyers for her valuable contributions at the outset of this project, which were crucial for its inception.
We thank our MLOps team for developing our model training and experiment orchestration platform \href{https://aichor.ai/}{AIchor}. We thank the Python and JAX communities for developing tools that made this research possible. We thank the anonymous reviewers for their constructive feedback and valuable suggestions. Finally, we thank Google’s TPU Research Cloud (TRC) for supporting our research with Cloud TPUs.

\newpage

\bibliography{references}
\bibliographystyle{abbrvnat}


\newpage

\appendix

\section{Additional Algorithm Details}
\subsection{Proof of Theorem 1} \label{sec: proof}

\begin{theorem}
For an auto-regressive model, the multi-agent joint-policy under ICQ regularisation can be optimised for $j = 1, ..., n$ as follows:
\begin{equation*}
\pi^{i_j}_* = \underset{\pi^{i_j}}{\mathrm{argmax}}\, \mathbb{E}_{\boldsymbol{\tau}, \mathbf{a}^{i_{1:j}} \sim \mathcal{B}} \left[ -\frac{1}{Z^{i_{1:j}}(\boldsymbol{\tau})} \log(\pi^{i_j}(a^{ i_j} \mid \boldsymbol{\tau}, \mathbf{a}^{i_{1:j-1}})) \exp\left(\frac{A^{i_{1:j}}(\boldsymbol{\tau}, \mathbf{a}^{i_{1:j}})}{\alpha} \right) \right],
\end{equation*}
where $Z^{i_{1:j}}(\boldsymbol{\tau}) = \prod^j_{l=1}\sum_{\tilde{a}^{i_l}} \mu^{i_l}(\tilde{a}^{i_l}|\boldsymbol{\tau}, \mathbf{a}^{i_{1:l-1}}) \exp\left(\frac{A^{i_{1:l}}(\boldsymbol{\tau}, \mathbf{a}^{i_{1:l}})}{\alpha} \right)$ is the normalisation function.
\end{theorem}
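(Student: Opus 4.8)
The plan is to lift the single-agent ICQ solution to the \emph{joint} policy and then show it factorises agent-by-agent, matching the auto-regressive structure of the model. Concretely, I would first apply the ICQ constrained program $\max_\pi \mathbb{E}_{\boldsymbol{a}\sim\pi}[A(\boldsymbol{\tau},\boldsymbol{a})]$ subject to $D_{KL}(\pi\parallel\mu)[\boldsymbol{\tau}]\le\epsilon$ to the joint policy, exactly as in the ICQ background. Solving the Lagrangian gives the closed form $\pi_*(\boldsymbol{a}\mid\boldsymbol{\tau}) = \frac{1}{Z(\boldsymbol{\tau})}\,\mu(\boldsymbol{a}\mid\boldsymbol{\tau})\exp\!\big(A(\boldsymbol{\tau},\boldsymbol{a})/\alpha\big)$, where using $A$ rather than $Q$ is legitimate because they differ only by $V(\boldsymbol{\tau})$, a term constant in $\boldsymbol{a}$ that is absorbed into $Z(\boldsymbol{\tau})$.

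Next I would substitute the three factorisations that the architecture and theory provide: the auto-regressive policy $\pi_*(\boldsymbol{a}\mid\boldsymbol{\tau})=\prod_{j}\pi_*^{i_j}$, the analogous factorisation of the behaviour policy $\mu(\boldsymbol{a}\mid\boldsymbol{\tau})=\prod_j\mu^{i_j}$, and the advantage decomposition theorem $A(\boldsymbol{\tau},\boldsymbol{a})=\sum_{j}A^{i_j}(\boldsymbol{\tau},\mathbf{a}^{i_{1:j-1}},a^{i_j})$. Because the exponential of a sum is a product of exponentials, the joint solution splits into a product of per-agent factors, and matching the $j$-th factor (a conditional distribution over $a^{i_j}$ given $\boldsymbol{\tau},\mathbf{a}^{i_{1:j-1}}$) yields $\pi_*^{i_j}\propto \mu^{i_j}\exp(A^{i_j}/\alpha)$. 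I would then record the telescoping identity $A^{i_{1:j}}=A^{i_{1:j-1}}+A^{i_j}$, which lets me rewrite this factor using the cumulative advantage $A^{i_{1:j}}$ appearing in the statement: since $A^{i_{1:j-1}}$ does not depend on $a^{i_j}$, the replacement $A^{i_j}\mapsto A^{i_{1:j}}$ introduces only a factor constant in $a^{i_j}$ that cancels between numerator and normaliser, leaving $\pi_*^{i_j}$ unchanged.

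With the per-agent target in hand, I would finish exactly as ICQ does: fit $\pi^{i_j}$ by the forward KL projection $\min D_{KL}(\pi_*^{i_j}\parallel\pi^{i_j})$, drop the entropy of $\pi_*^{i_j}$ (constant in the learnable parameters), and rewrite the remaining cross-entropy as a weighted log-likelihood over dataset samples $(\boldsymbol{\tau},\mathbf{a}^{i_{1:j}})\sim\mathcal{B}$ drawn from $\mu$ --- precisely the ICQ trick of sampling targets from the behaviour data. This produces the stated per-agent objective with weight $\exp(A^{i_{1:j}}/\alpha)/Z^{i_{1:j}}(\boldsymbol{\tau})$, and the product form $Z^{i_{1:j}}=\prod_{l=1}^j\sum_{\tilde a^{i_l}}\mu^{i_l}\exp(A^{i_{1:l}}/\alpha)$ emerges as the correct normaliser of the partial-joint weighting assembled from the first $j$ per-agent factors.

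The main obstacle I anticipate is the normalisation bookkeeping rather than any deep step. I must verify carefully that (i) the single-advantage and cumulative-advantage forms give identical per-agent optima, and (ii) the product-over-$l$ constant $Z^{i_{1:j}}$ is the right normaliser for the cumulative weight, reconciling it with the individual per-agent partition functions. Since each such factor is a positive scalar independent of $\pi^{i_j}$ it cannot affect the optimiser, but stating the theorem with the exact $Z^{i_{1:j}}$ requires tracking how the constant-in-$a^{i_j}$ terms $\exp(A^{i_{1:l-1}}/\alpha)$ distribute across the product.
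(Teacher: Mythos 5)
Your proposal is correct to the same standard of rigour as the paper's own argument, but it reaches the theorem by a genuinely different route. The paper works entirely on the \emph{objective} side: it takes the joint ICQ policy loss $\mathcal{J}_{\boldsymbol{\pi}} = \mathbb{E}\left[-\tfrac{1}{Z(\boldsymbol{\tau})}\log\pi(\mathbf{a}\mid\boldsymbol{\tau})\exp\left(A(\boldsymbol{\tau},\mathbf{a})/\alpha\right)\right]$, substitutes the auto-regressive factorisation of $\log\pi$ and the advantage decomposition of $A$, pulls the sum over agents outside the expectation, and splits the exponential weight into a leading part ($l\le j$) and a trailing part ($k>j$), with the trailing part absorbed as a constant via $Z = Z^{i_{1:j}}\cdot Z^{i_{j+1:n}}$; the cumulative advantage $A^{i_{1:j}}=\sum_{l=1}^{j}A^{i_l}$ then appears automatically in each per-agent term. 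You instead work on the \emph{solution} side: you derive the joint ICQ optimum $\pi_*\propto\mu\exp(A/\alpha)$, factorise it into per-agent conditionals $\pi_*^{i_j}\propto\mu^{i_j}\exp(A^{i_j}/\alpha)$, invoke the telescoping identity $A^{i_{1:j}}=A^{i_{1:j-1}}+A^{i_j}$ to pass to the cumulative advantage (a step the paper never needs, since its loss-level decomposition keeps the leading advantages attached to $\log\pi^{i_j}$ from the start), and only then introduce the per-agent forward-KL projection and the behaviour-sampling rewrite. Your route buys interpretability—it makes explicit that the per-agent targets are exactly the conditionals of the joint ICQ target, and it cleanly isolates why weighting by $\exp(A^{i_j}/\alpha)$ or $\exp(A^{i_{1:j}}/\alpha)$ yields the same optimiser—while the paper's route produces the stated expectation over $(\boldsymbol{\tau},\mathbf{a}^{i_{1:j}})\sim\mathcal{B}$ and the weight $\exp(A^{i_{1:j}}/\alpha)/Z^{i_{1:j}}$ in a single pass. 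One caution: your ``matching the $j$-th factor'' step implicitly requires the backward sum over trailing agents, $\sum_{\mathbf{a}^{i_{j+1:n}}}\prod_{k>j}\mu^{i_k}\exp(A^{i_k}/\alpha)$, to be independent of the leading actions $\mathbf{a}^{i_{1:j}}$, since otherwise the true conditionals of $\pi_*$ pick up this action-dependent backward message. This is not a defect relative to the paper: the paper's factorisation of the expectation into a product, and its writing of $Z^{i_{1:j}}(\boldsymbol{\tau})$ and $Z^{i_{j+1:n}}(\boldsymbol{\tau})$ as functions of $\boldsymbol{\tau}$ alone despite their definitions involving actions, rests on exactly the same simplification—so the two proofs stand or fall together on this point, and your closing observation (positive factors independent of $\pi^{i_j}$ cannot move the argmax) is the same resolution the paper uses implicitly.
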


\begin{proof}
For an auto-regressive model, we can decompose the joint policy as
\begin{align*}
\pi(\boldsymbol{a}|\boldsymbol{\tau}) = \pi^{i_{1:n}}(\boldsymbol{a}^{i_{1:n}}|\boldsymbol{\tau}) = \prod^n_{j=1} \pi^{i_j}(a^{i_j}|\boldsymbol{\tau}, \mathbf{a}^{i_{1:j-1}}).
\end{align*}
Furthermore, from the advantage decomposition theorem by \cite{kuba2021settling}, we have that
\begin{equation*} \label{eq:advantage_decomposition)}
A(\boldsymbol{\tau}, \mathbf{a}) = A^{i_{1:n}}(\boldsymbol{\tau}, \mathbf{a}^{i_{1:n}}) = \sum_{j=1}^{n} A^{i_j}(\boldsymbol{\tau}, \mathbf{a}^{i_{1:j-1}}, a^{i_j}).
\end{equation*}
Let $\mathcal{J}_{\boldsymbol{\pi}}$ denote the joint-policy loss as in \cite{Yang2021BelieveWY}. Using the above decompositions we can re-write the loss as follows
\begin{align*}
\mathcal{J}_{\boldsymbol{\pi}} &= \mathbb{E}_{\boldsymbol{\tau}, \mathbf{a} \sim B} \left[ -\frac{1}{Z(\boldsymbol{\tau})} \log(\pi(\mathbf{a}|\boldsymbol{\tau})) \exp\left(\frac{A(\boldsymbol{\tau}, \mathbf{a})}{\alpha} \right) \right] \nonumber \\
&= \mathbb{E}_{\boldsymbol{\tau}, \mathbf{a}^{i_{1:n}} \sim B} \left[ -\frac{1}{Z(\boldsymbol{\tau})} \left(\sum^n_{j=1} \log(\pi^{i_j}(a^{ i_j} \mid \boldsymbol{\tau}, \mathbf{a}^{i_{1:j-1}}))\right) \exp\left(\frac{\sum_{l=1}^{n} A^{i_l}(\boldsymbol{\tau}, \mathbf{a}^{i_{1:l-1}}, a^{i_l})}{\alpha}\right) \right] \\
&= \sum^n_{j=1} \mathbb{E}_{\boldsymbol{\tau}, \mathbf{a}^{i_{1:n}} \sim B} \left[ -\frac{1}{Z(\boldsymbol{\tau})} \left( \log(\pi^{i_j}(a^{ i_j} \mid \boldsymbol{\tau}, \mathbf{a}^{i_{1:j-1}}))\right) \exp\left(\frac{\sum_{l=1}^{n} A^{i_l}(\boldsymbol{\tau}, \mathbf{a}^{i_{1:l-1}}, a^{i_l})}{\alpha}\right) \right] \\
&=\sum^{n}_{j=1} \mathbb{E}_{\boldsymbol{\tau}, \mathbf{a}^{i_{1:j}}, \mathbf{a}^{i_{j+1:n}} \sim B} \left[ -\frac{1}{Z(\boldsymbol{\tau})} \log(\pi^{i_j}(a^{ i_j} \mid \boldsymbol{\tau}, \mathbf{a}^{i_{1:j-1}})) \exp\left(\frac{\sum_{l=1}^{j} A^{i_l}(\boldsymbol{\tau}, \mathbf{a}^{i_{1:l-1}}, a^{i_l})}{\alpha} \right) \right.\\
& \hspace{5em} \left. \vphantom{\frac{1}{Z(\boldsymbol{\tau})}} \cdot \exp\left(\frac{\sum_{k=j+1}^{n} A^{i_k}(\boldsymbol{\tau}, \mathbf{a}^{i_{1:k-1}}, a^{i_k})}{\alpha}\right) \right] \\
&=\sum^{n}_{j=1} \mathbb{E}_{\boldsymbol{\tau}, \mathbf{a}^{i_{1:j}} \sim B} \left[ -\frac{1}{Z^{i_{1:j}}(\boldsymbol{\tau})} \log(\pi^{i_j}(a^{ i_j} \mid \boldsymbol{\tau}, \mathbf{a}^{i_{1:j-1}})) \exp\left(\frac{\sum_{l=1}^{j} A^{i_l}(\boldsymbol{\tau}, \mathbf{a}^{i_{1:l-1}}, a^{i_l})}{\alpha} \right) \right] \\
&\hspace{5em} \cdot \mathbb{E}_{\boldsymbol{\tau}, \mathbf{a}^{i_{1:n}} \sim B} \left[ \frac{1}{Z^{i_{j+1:n}}(\boldsymbol{\tau})} \exp\left(\frac{\sum_{k=j+1}^{n} A^{i_k}(\boldsymbol{\tau}, \mathbf{a}^{i_{j+1:k-1}}, a^{i_k})}{\alpha}\right) \right] \\
&=\sum^n_{j=1} \mathbb{E}_{\boldsymbol{\tau}, \mathbf{a}^{i_{1:j}} \sim B} \left[ -\frac{1}{Z^{i_{1:j}}(\boldsymbol{\tau})} \log(\pi^{i_j}(a^{ i_j} \mid \boldsymbol{\tau}, \mathbf{a}^{i_{1:j-1}})) \exp\left(\frac{A^{i_{1:j}}(\boldsymbol{\tau}, \mathbf{a}^{i_{1:j}})}{\alpha} \right) \right]\\
\end{align*}
where the normalising partition function is given as
\begin{align*}
Z(\boldsymbol{\tau}) & = \prod^n_{m=1}\sum_{\tilde{a}^{i_m}} \mu^{i_m}(\tilde{a}^{i_m}|\boldsymbol{\tau}, \mathbf{a}^{i_{1:m-1}}) \exp\left(\frac{A^{i_{1:m}}(\boldsymbol{\tau}, \mathbf{a}^{i_{1:m}})}{\alpha} \right),
\end{align*}
and for any specific $j = 1, ..., n$ can be factorised as
\begin{align*}
Z(\boldsymbol{\tau}) & = \prod^j_{l=1}\sum_{\tilde{a}^{i_l}} \mu^{i_l}(\tilde{a}^{i_l}|\boldsymbol{\tau}, \mathbf{a}^{i_{1:l-1}}) \exp\left(\frac{A^{i_{1:l}}(\boldsymbol{\tau}, \mathbf{a}^{i_{1:l}})}{\alpha} \right) \cdot \prod^n_{k=j+1}\sum_{\tilde{a}^{i_k}} \mu^{i_k}(\tilde{a}^{i_k}|\boldsymbol{\tau}, \mathbf{a}^{i_{1:k-1}}) \exp\left(\frac{A^{i_{1:k}}(\boldsymbol{\tau}, \mathbf{a}^{i_{1:k}})}{\alpha} \right) \\
& = Z^{i_{1:j}}(\boldsymbol{\tau}) \cdot Z^{i_{j+1:n}}(\boldsymbol{\tau})
\end{align*}
\end{proof}

\subsection{About Sable}
Sable employs a Retention-based architecture designed to efficiently model long-range temporal dependencies amongst many agents. The model operates in two distinct modes depending on the phase: \textit{recurrent execution} and \textit{chunkwise training}.

\paragraph{Execution Phase.} During interaction with the environment, Sable runs in \textit{recurrent mode}, maintaining a retention state that evolves over time. This mode supports memory- and compute-efficient inference, as it scales linearly with the number of agents and is constant with respect to time.

\paragraph{Training Phase.} For gradient-based optimisation, Sable leverages \textit{chunkwise mode}, a parallel variant of Retention that processes fixed-length temporal chunks. Unlike fully parallel mechanisms, the chunkwise representation preserves hidden state transitions across chunk boundaries. This design allows the model to propagate temporal signals through time respecting episode boundaries (resettable retention). The choice of chunkwise training over parallel mode ensures that the retention state $\{S_{\tau}\}$ can be passed between training chunks $\tau$, maintaining temporal coherence and improving convergence stability.

\paragraph{Oryx Adaptation.} In our offline variant, Oryx, we adopt only the \textit{chunkwise} training mode from Sable. Since training occurs entirely off-policy datasets (offline), we do not maintain a persistent recurrent retention state $\{S_{\tau}\}$. Instead, whenever we initiate the training phase, we begin with a \texttt{None} hidden state, ensuring no temporal leakage from prior training phases.\\
As mentioned in the main text, our adaptation of Sable to Oryx includes additional structural modifications to better support the ICQ loss. Specifically, we remove the value head from the encoder and use it solely to produce observation representations. The decoder, in turn, is extended to output both the action distribution and Q-values, enabling compatibility with the ICQ training objective.

\newpage
\section{T-MAZE}

\begin{figure}[h!]
    \centering
    \begin{subfigure}{0.26\textwidth}
        \centering
        \includegraphics[width=\textwidth]{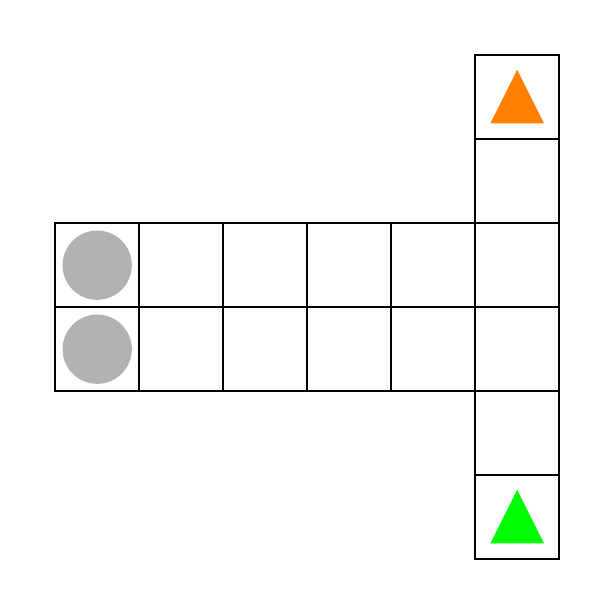}
        \caption{\texttt{T-Maze step one}}
        \label{fig:image1}
    \end{subfigure}
    \hspace{1cm}
    \begin{subfigure}{0.26\textwidth}
        \centering
        \includegraphics[width=\textwidth]{{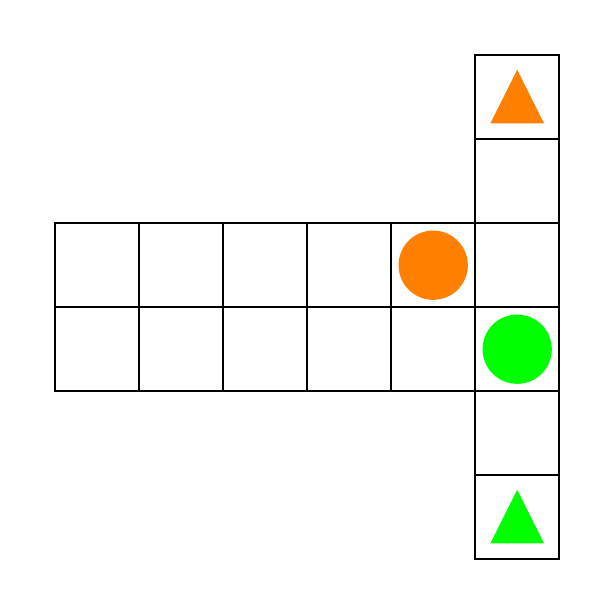}}
        \caption{\texttt{T-Maze middle step}}
        \label{fig:image1}
    \end{subfigure}
    \hspace{1cm}
    \begin{subfigure}{0.26\textwidth}
        \centering
        \includegraphics[width=\textwidth]{{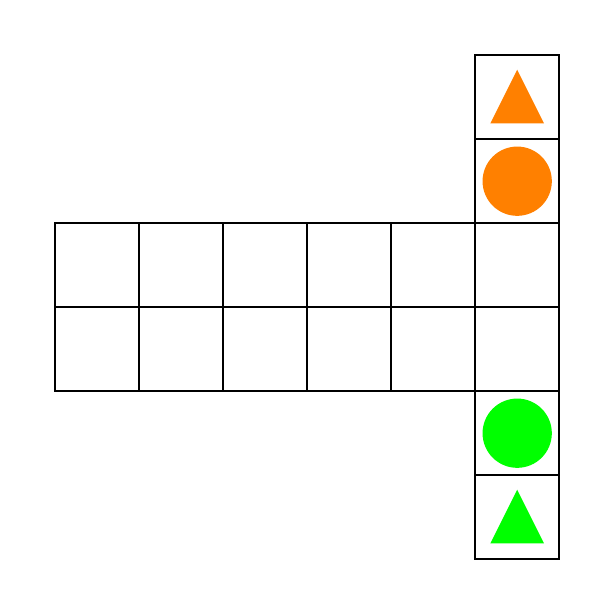}}
        \caption{\texttt{T-Maze last step}}
        \label{fig:image1}
    \end{subfigure}
    \caption{Environment visualisation for Connector.}
    \label{fig:tmaze-appendix}
\end{figure}

\subsection{Environment Details}
The T-Maze environment is intentionally designed as a minimalist setting that isolates key challenges for multi-agent reinforcement learning: interdependent action selection, reliance on memory from previous timesteps, and effective coordination to achieve a common goal. The environment unfolds in two distinct phases.

\textbf{Phase 1: Initial Target Color Selection}

This phase spans a single timestep at the beginning of each episode and is visualized in Figure \autoref{fig:tmaze-appendix} (a). During this step:

\begin{itemize}
    \item \textbf{Observation:} Agents receive no specific environmental observation.
    \item \textbf{Action Space:} Each agent must independently choose one of two actions: \textit{choose orange} or \textit{choose green}.
    \item \textbf{Coordination Requirement:} For the episode to be solvable, the two agents must select different target colors. If both agents choose the same color, they will be assigned the same goal location, making successful completion impossible. This necessitates a coordinated action selection strategy at the outset.
\end{itemize}

\textbf{Phase 2: Navigation and Goal Achievement}
This phase encompasses all subsequent timesteps until the episode terminates. It is visualised in Figure \autoref{fig:tmaze-appendix} (b) and (c)

\begin{itemize}
    \item Initial Setup (Start of Phase 2):
    \begin{itemize}
        \item \textbf{Agent Placement:} The two agents are randomly assigned to one of two distinct starting squares located at the base of the T-maze stem.
        \item \textbf{Goal Assignment:} The two target locations (at the ends of the T-maze arms) are randomly assigned the colors green and orange, ensuring one goal is green and the other is orange.
    \end{itemize}
    \item \textbf{Observation Space (During Navigation):} In each step of this phase, agents receive the following information:
    \begin{itemize}
        \item A 3x3 local grid view, centered on the agent's current position, showing the maze structure and potentially the other agent if within this vicinity.
        \item Information indicating which corridor contains the green target. From this, the location of the orange target in the opposite corridor can be inferred.
        \item Their own action taken in the immediately preceding timestep.
    \end{itemize}
    \item \textbf{Action Space (During Navigation):} Agents can choose to move in any of the four cardinal directions (up, down, left, right) or to take a \textit{do nothing} action.
    \item \textbf{Memory Requirement:} The critical memory challenge arises from the observation structure. On the first timestep of Phase 2 (the second timestep of the episode overall), an agent observes the outcome of its \textit{choose target} action from Phase 1 (i.e., it knows its chosen color and the corresponding goal location). For all subsequent timesteps in Phase 2, the agent only observes its more recent movement actions as its "previous action." Therefore, each agent must retain the memory of its initially chosen target color for the remainder of the episode to navigate correctly.
\end{itemize}

\textbf{Dynamics and Rewards}
\begin{itemize}
    \item \textbf{Movement and Collision:} If an agent attempts to move into a wall or into a square occupied by the other agent, its position remains unchanged for that timestep.
    \item \textbf{Reward Structure:} Agents receive a sparse team reward. A reward of +1 is given if both agents are simultaneously positioned on their correctly colored target locations. In all other timesteps, and for all other outcomes (including collisions or incorrect goal locations), the reward is 0.
\end{itemize}

This design forces agents to first coordinate on distinct objectives, then remember their individual objective over a potentially long horizon while navigating a shared space and avoiding interference at junctions.

\subsection{Dataset Generation}
For the T-Maze experiment, we generate two types of offline datasets: \textbf{replay} and \textbf{expert}. Both are collected from training Sable for 20 million timesteps. The replay dataset is recorded continuously during training by logging trajectories sampled throughout the execution phase. In contrast, the expert dataset is generated post-training by evaluating the final policy parameters for a fixed number of steps, producing trajectories that reflect near-expert behaviour.

The replay dataset contains over 16 million transitions with a mean episode return of 0.559, reflecting a mix of behaviours observed throughout training. While, the good dataset comprises 100,000 transitions collected by evaluating the final policy, with all episodes achieving a perfect return of 1.

\begin{figure}  
    \centering
    \includegraphics[width=0.3\textwidth]{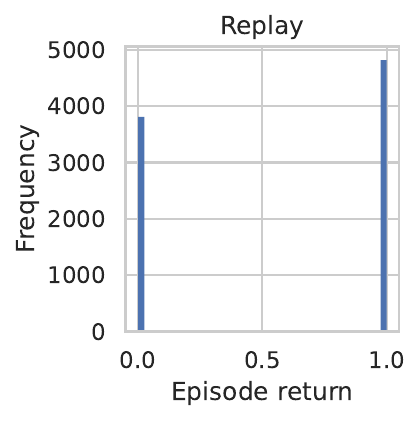}
    \caption{Distribution of episode returns of the recorded Replay Data for T-Maze.}
    \label{fig:tmaze_data}
\end{figure}

\subsection{Algorithms Details}

\subsubsection{Implementation Details}
For baselines, we select MAICQ~\citep{Yang2021BelieveWY} and its fully independent variant (I-ICQ), in which we remove the QMixer component to enable decentralised training across agents.\\ 
Besides the baselines and Oryx, we test the effect of isolating Oryx’s key components. The first ablation concerns \textit{disabling auto-regressive actions}, in the original implementation, agent $i+1$ receives the actions of agents $i, i-1,..., 1$ as an input, within the same timestep, to support sequential coordination. To disable this mechanism, we instead feed a constant placeholder value of -1 as the previous action input to all agents during both training and evaluation. The second ablation targets \textit{removing temporal dependency}, to achieve this, we reduce the sequence length from the default value of 20 (used in Oryx) to 2, such that only timesteps $t$ and $t+1$ are provided during training. This limited context prevents the model from capturing or leveraging long-term temporal dependencies, effectively disabling its ability to retain memory across steps. The third ablation is about \textit{removing the ICQ loss component}, specifically, we eliminate the advantage estimation and policy regression terms from the loss function, retaining only standard Q-learning.

\subsubsection{Evaluation Details and Hyperparameters}
Each offline system is trained for 100,000 gradient updates. Final performance is evaluated over 32 parallel episodes, with results aggregated across 10 independent training runs using different random seeds.

\begin{table}[htbp]
  \centering
  \begin{minipage}[t]{0.45\textwidth}
    \centering
    \caption{Default hyperparameters for MAICQ and I-ICQ}
    \begin{tabular}{l|c}
      \textbf{Parameter} & \textbf{Value} \\
      \hline
      Sample sequence length          & 20    \\
      Sample batch size               & 64     \\
      Learning rate                   & 3e-4  \\
      ICQ Value temperature           & 1000   \\
      ICQ Policy temperature          & 0.1    \\
      Linear layer dimension          & 64    \\
      Recurrent layer dimension       & 64    \\
      Mixer embedding dimension       & 32    \\
      Mixer hypernetwork dimension    & 64    \\
    \end{tabular}
  \end{minipage}\hfill
  \begin{minipage}[t]{0.45\textwidth}
    \centering
    \caption{Default hyperparameters for Oryx and its ablation variants}
    \begin{tabular}{l|c}
      \textbf{Parameter} & \textbf{Value} \\
      \hline
      Sample sequence length          & 20     \\
      Sample batch size               & 64     \\
      Learning rate                   & 3e-4   \\
      ICQ Value temperature           & 1000   \\
      ICQ Policy temperature          & 0.1    \\
      Model embedding dimension       & 64    \\
      Number retention heads          & 1      \\
      Number retention blocks         & 1      \\
      Retention heads $\kappa$ scaling & 0.5   \\
    \end{tabular}
  \end{minipage}
\end{table}
\clearpage

\section{Connector}

\subsection{Environment Details}

\begin{figure}[h!]
    \centering
    \begin{subfigure}{0.26\textwidth}
        \centering
        \includegraphics[width=0.55\textwidth]{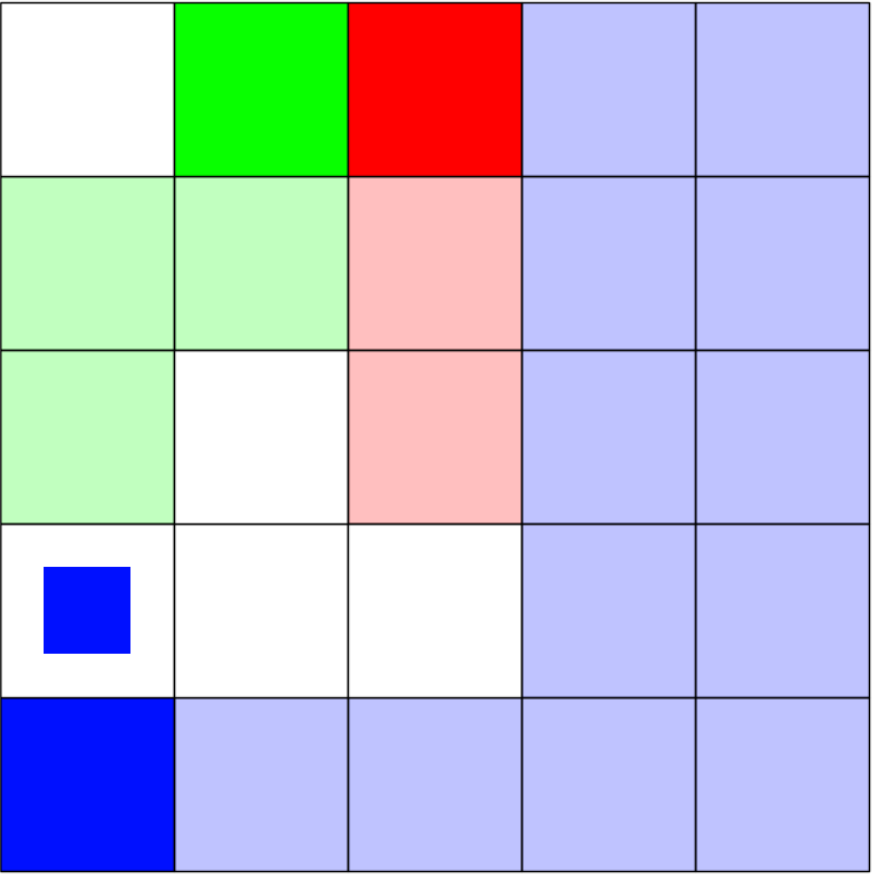}
        \caption{\texttt{5x5x3a}}
        \label{fig:image1}
    \end{subfigure}
    \hspace{1cm}
    \begin{subfigure}{0.26\textwidth}
        \centering
        \includegraphics[width=0.8\textwidth]{figures/connector/render.pdf}
        \caption{\texttt{15x15x23a}}
        \label{fig:image1}
    \end{subfigure}
    \hspace{1cm}
    \begin{subfigure}{0.26\textwidth}
        \centering
        \includegraphics[width=\textwidth]{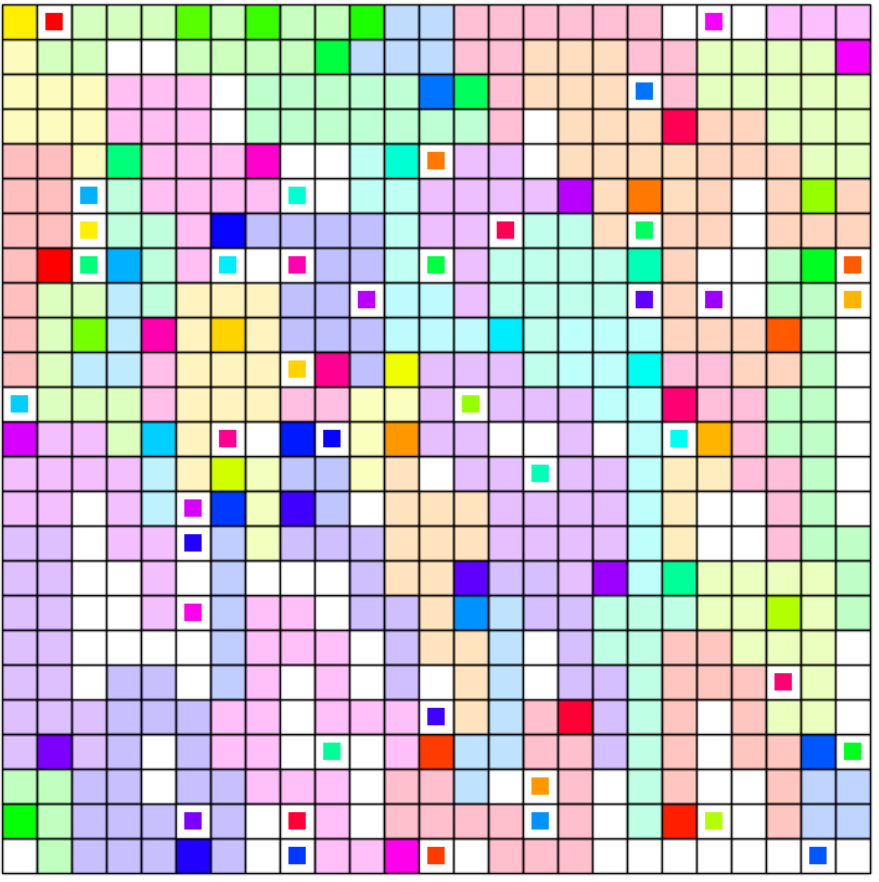}
        \caption{\texttt{25x25x50a}}
        \label{fig:image1}
    \end{subfigure}
    \caption{Environment visualisation for Connector.}
    \label{fig:connector}
\end{figure}

The Connector environment~\citep{jumanji} is a cooperative multi-agent grid world where each agent is randomly assigned a start and end position and must construct a path to connect the two. As agents move, they leave behind impassable trails, introducing the need for coordination to avoid blocking others.
The action space is discrete with five options: \texttt{up}, \texttt{down}, \texttt{left}, \texttt{right}, and \texttt{no-op}. Agents observe a local $d \times d$ view centered on their position, including visible trails, their own coordinates, and all target locations. The reward function assigns $+1$ when an agent successfully connects to its target and $-0.03$ at every other timestep, with no further reward once connected.

For scenario naming, we adopt the convention \texttt{con-<x\_size>x<y\_size>-<num\_agents>a}, where each task is defined by the grid dimensions and the number of agents. In our experiments, we use the original Connector scenarios from the Sable paper~\citep{Mahjoub2024SableAP}: \texttt{5x5x3a}, \texttt{7x7x5a}, \texttt{10x10x10a}, and \texttt{15x15x23a}. To evaluate scalability beyond 23 agents, we introduce three new scenarios—\texttt{18x18x30a}, \texttt{22x22x40a}, and \texttt{25x25x50a}—while approximately preserving agent density across scenarios.

\subsection{Dataset Details}
\label{data-sub-sampling}
To generate the offline datasets, we train Sable using the hyperparameters reported in the original Sable paper for each Connector task; for scenarios with more than 30 agents, we reuse the parameters from the 23-agent setting.
For all tasks involving 3 to 30 agents, we train Sable for 20M timesteps and record the full execution data, resulting in 20M samples per task. We then uniformly subsample 1M transitions using the tools introduced in~\cite{Formanek2024PuttingDA}, which randomly shuffle the set of available episodes and iteratively select full episodes until the target number of transitions is reached. For the larger 40-agent and 50-agent scenarios, due to memory constraints, we directly record less than 1M transitions during training and use them as-is without additional subsampling.

\begin{table}[h!]
\centering
\caption{Summary of Recorded Offline Datasets}
\begin{tabular}{l|c|c|c|c}
\textbf{Task} & \textbf{Samples} & \textbf{Mean Return} & \textbf{Max Return} & \textbf{Min Return} \\
\hline

con-5x5x3a & 1.17M & 0.59 & 0.97 &  -0.75 \\
con-7x7x5a  & 1.13M & 0.48 & 0.97 & -1.23 \\
con-10x10x10a  & 1.09M & 0.40 & 0.97 & -1.53 \\
con-15x15x23a  & 1.06M & 0.34 & 0.97 & -1.56 \\
con-18x18x30a  & 1.00M & 0.25 & 0.97 & -2.43 \\
con-22x22x40a  & 624,640 & 0.4 & 0.97 & -2.61 \\
con-25x25x50a  & 624,640 & 0.33 & 0.97 & -3.06 \\
\hline

\end{tabular}
\end{table}

\begin{figure}[h!]
    \centering
    \begin{subfigure}{0.26\textwidth}
        \centering
        \includegraphics[width=\textwidth]{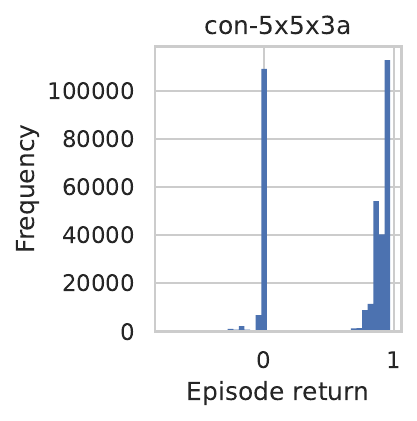}
    \end{subfigure}
    \hspace{1cm}
    \begin{subfigure}{0.26\textwidth}
        \centering
        \includegraphics[width=\textwidth]{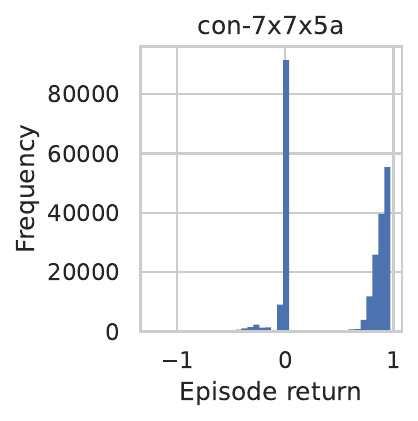}
    \end{subfigure}
    \hspace{1cm}
    \begin{subfigure}{0.26\textwidth}
        \centering
        \includegraphics[width=\textwidth]{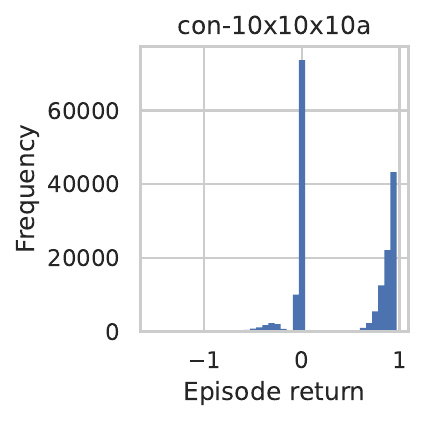}
    \end{subfigure}
    \vspace{0.5cm}
        \centering
    \begin{subfigure}{0.26\textwidth}
        \centering
        \includegraphics[width=\textwidth]{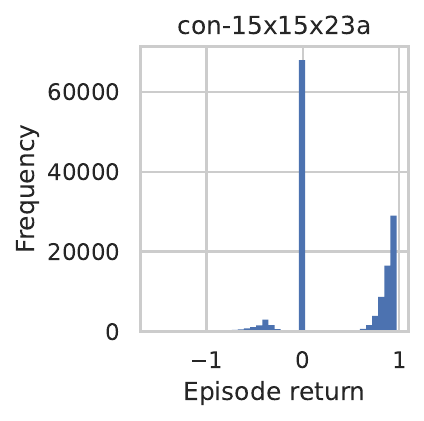}
    \end{subfigure}
    \hspace{1cm}
    \begin{subfigure}{0.26\textwidth}
        \centering
        \includegraphics[width=\textwidth]{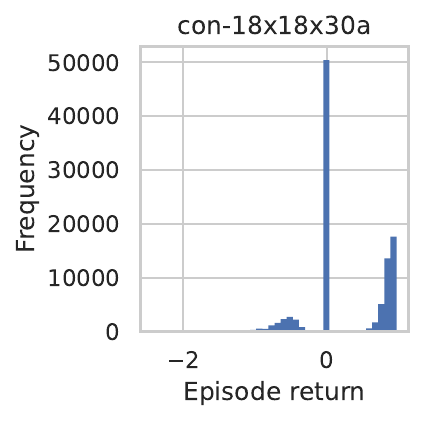}
    \end{subfigure}
    \hspace{1cm}
    \begin{subfigure}{0.26\textwidth}
        \centering
        \includegraphics[width=\textwidth]{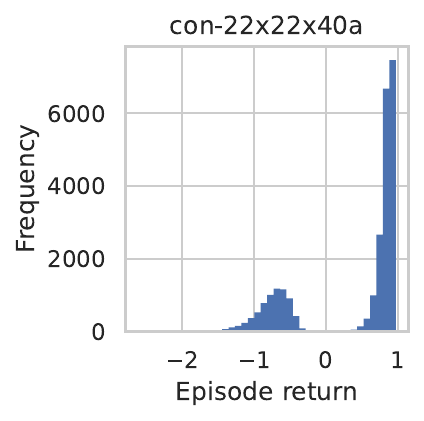}
    \end{subfigure}
    \vspace{0.5cm}
        \centering
    \begin{subfigure}{0.26\textwidth}
        \centering
        \includegraphics[width=\textwidth]{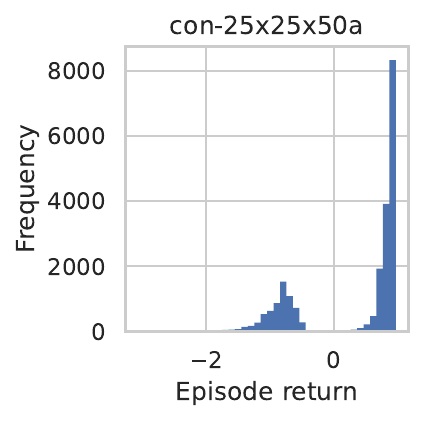}
    \end{subfigure}

    \caption{Distribution of episode returns of the recorded Replay Data for Connector.}
    \label{fig:sub-sample-con}
\end{figure}

\subsection{Extra Results on Default Connector}
We evaluate Oryx against several well-established baselines on the Connector tasks with $n=3,5,10,23$: (i) MAICQ, along with its decentralised variant I-ICQ, (iii) Multi-Agent Decision Transformer (MADT)~\citep{Meng2021OfflinePM}, and (iii) IQL-CQL. Both MAICQ and IQL-CQL are the implementations from OG-MARL~\citep{Formanek2024DispellingTM}. Based on their relative performance, we select the most competitive baseline for the scaling experiments ($n=30,40,50$) reported in \autoref{testing-scale}.

\begin{figure}[h!]
\centering
\includegraphics[width=0.5\textwidth]{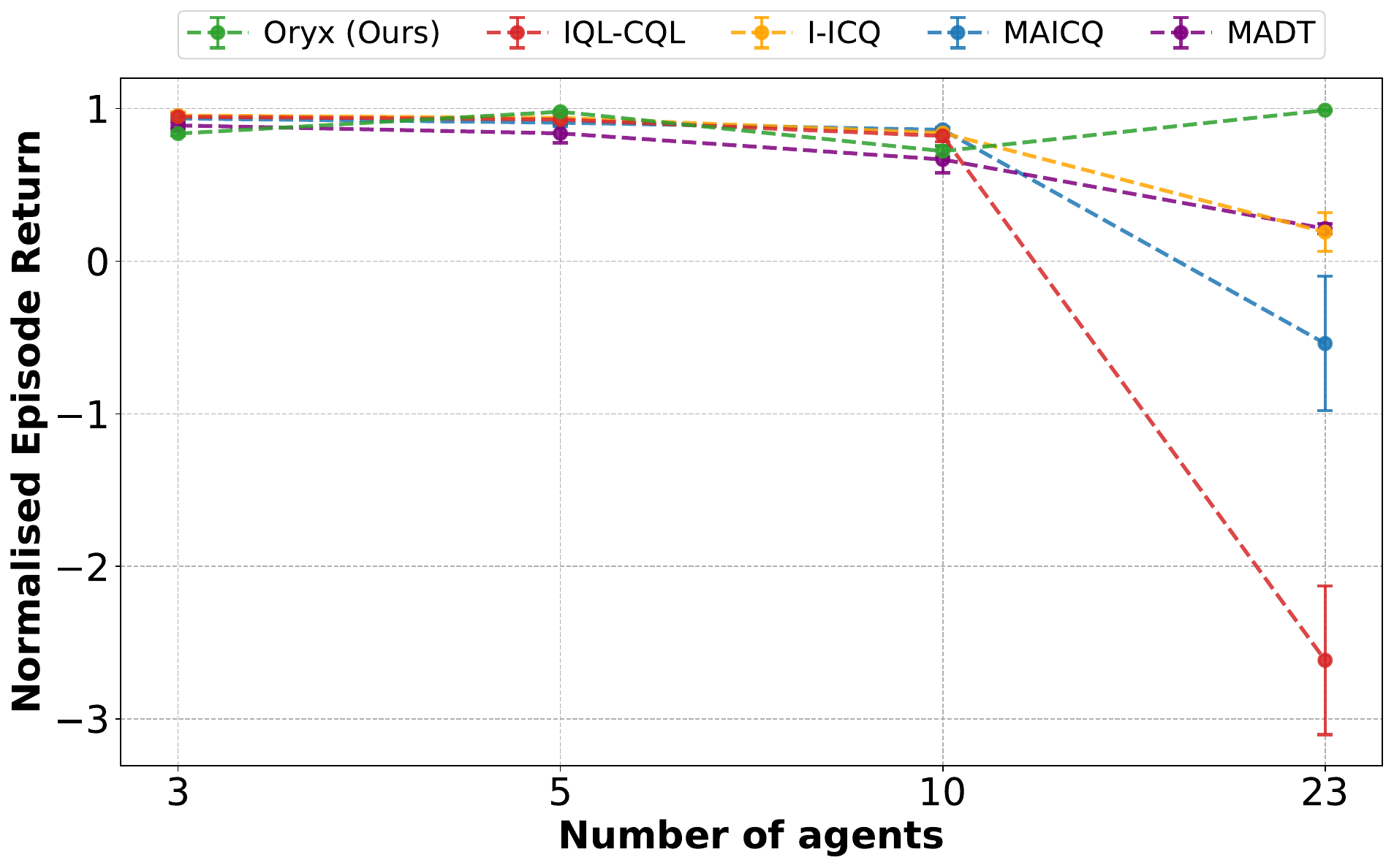}
\caption{\textit{Performance on the smaller Connector scenarios—}While all methods perform similarly in low-agent settings, Oryx begins to outperform the other baselines as the number of agents increases. I-ICQ and MADT follow. However, despite our best efforts, we could not run MADT on the larger agent instances without running out of Memory on our compute infrastructure. As such, I-ICQ offered a better compute-performance trade-off, making it the preferred baseline for scaling beyond 23 agents.}
\label{fig:perf-con}
\end{figure}

\subsection{Evaluation Details and Hyperparameters}
For evaluation, we train Oryx and MAICQ for 100k gradient updates across 10 seeds (reduced to 3 seeds for settings with 40 and 50 agents due to computational constraints). Final performance is reported based on evaluation over 320 episodes.\\
We conducted preliminary comparisons between I-ICQ and MAICQ and observed that I-ICQ consistently outperformed the centralised training system (see \autoref{fig:perf-con}).

All the Connector experiments were distributed across NVIDIA A100 GPUs (40GB and 80GB VRAM).

\begin{table}[h!]
\centering
\caption{Default hyperparameters for MAICQ and I-ICQ}
\begin{tabular}{l|c}
\textbf{Parameter} & \textbf{Value} \\
\hline
Sample sequence length & 20\\
Sample batch size & 64 \\
Learning rate & 3e-4 \\
ICQ Value temperature  & 1000\\
ICQ Policy temperature  & 0.1 \\
Linear layer dimension & 128 \\
Recurrent layer dimension & 64 \\
Mixer embedding dimension (MAICQ only) & 32 \\
Mixer hypernetwork dimension (MAICQ only) & 64 \\
\end{tabular}
\end{table}

\begin{table}[h!]
\centering
\caption{Default hyperparameters for IQL-CQL}
\begin{tabular}{l|c}
\textbf{Parameter} & \textbf{Value} \\
\hline
Sample sequence length  & 20 \\
Sample batch size  & 64 \\
Learning rate & 3e-4 \\
Linear layer dimension & 128 \\
Recurrent layer dimension & 64 \\
CQL weight & 3.0 \\
\end{tabular}
\end{table}

\begin{table}[h!]
\centering
\caption{Default hyperparameters for Oryx}
\begin{tabular}{l|c}
\textbf{Parameter} & \textbf{Value} \\
\hline
Sample sequence length & 20\\
Sample batch size  & 64 \\
Learning rate & 3e-4 \\
ICQ Value temperature  & 1000\\
ICQ Policy temperature  & 0.1 \\
Model embedding dimension & 128 \\
Number retention heads & 4 \\
Number retention blocks & 1 \\
Retention heads $\kappa$ scaling parameter & 0.5 \\
\end{tabular}
\end{table}

For MADT, we adopt the default hyperparameters from the official repository of the original paper. However, we evaluated two configurations: (i) the default MADT setup, and (ii) an enhanced variant that includes reward-to-go, state, and action in the centralised observation. We found the latter consistently outperformed the default, and therefore use it in the reported results.

\clearpage
\section{SMAX}

Before trying scaling experiments on Connector, we first evaluated Oryx on SMAX, an adaptation of SMAC(v2)\citep{smacv2} implemented in JAX via JaxMARL\citep{jaxmarl}, to assess its performance on a widely adopted benchmark in MARL.

\subsection{About SMAX}
SMAX~\citep{jaxmarl} is a JAX-based reimplementation of  SMAC~\citep{samvelyan2019starcraftmultiagentchallenge} and SMAC(v2)~\citep{smacv2}, designed for efficient experimentation without the need for the StarCraft II engine. In this environment, agents form teams of heterogeneous units and cooperate to win battles in a real-time strategy setting. Each agent observes local information—such as positions, health, unit types, and recent actions of nearby allies and enemies—and selects from a discrete action space including movement and attack commands. Unlike SMAC, SMAX balances reward signals between tactical engagements (damage dealt) and final success (winning the episode).

For the scaling experiments, we focus on SMAC(v2), evaluating performance across four scenarios of increasing agent count: \texttt{smacv2\_5units}, \texttt{smacv2\_10units}, \texttt{smacv2\_20units}, and \texttt{smacv2\_30units}.

\subsection{Scaling Agents in SMAX}

\begin{figure}[h!]
\centering
\includegraphics[width=0.5\textwidth]{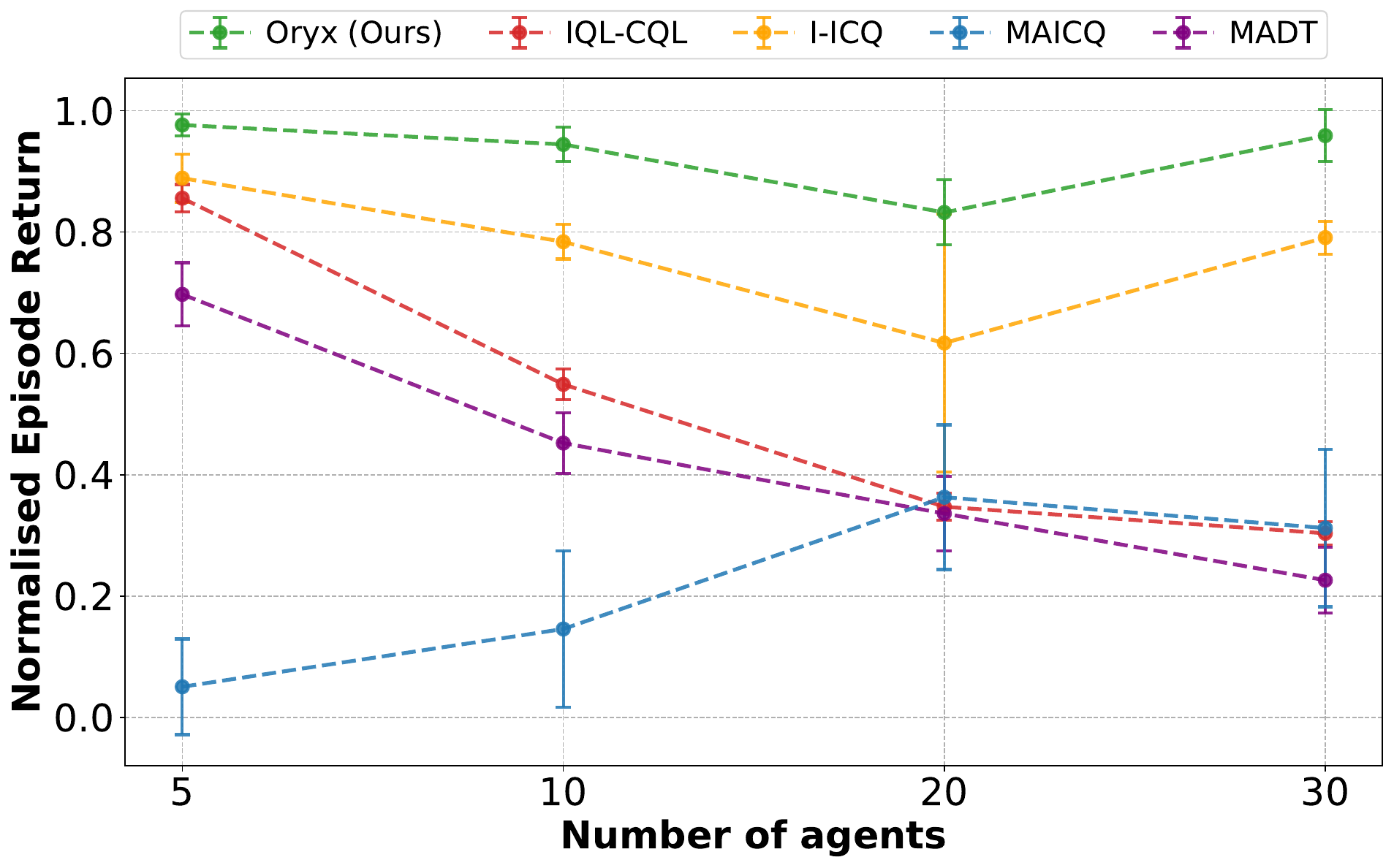}
\caption{\textit{Performance on SMAC(v2) across varying agent populations (5 to 30 agents)—} While SMAC scenarios do not always become harder with scale—some higher-agent tasks may in fact be less coordination-intensive—Oryx remains robust and achieves superior results across the board.}
\label{fig:perf-smax}
\end{figure}

\subsection{Evaluation Details and Hyperparameters}
For data collection, we follow the same protocol used for Connector (3–30 agents) as described in Section~\ref{data-sub-sampling}. The evaluation procedure and the chosen hyperparameters are also consistent with those applied in the Connector experiments.

\newpage
\section{Tabulated Benchmarking Results}
Here we provide the tabulated results from our benchmark on SMAC, RWARE and MAMuJoCo. The algorithm with the highest mean episode return is denoted in bold. Algorithm that are not significantly different to the best, based on the two-sided t-test we conducted \citep{papoudakis2021benchmarking,Formanek2024DispellingTM}, are denoted with an asterisk. All Oryx results are reported over 10 random seeds, with the mean and standard deviation given.

\begin{table}[ht!]
    \centering
    \caption{Results (Win Rate) on SMAC datasets from OMIGA~\citep{Wang2023OfflineMR}. Other algorithm results from ComaDICE \citep{Bui2024ComaDICEOC}.}
    \begin{adjustbox}{max width=0.9\textwidth} 
    \begin{tabular}{c|ccc|ccc|ccc|ccc}
        \toprule
        & \multicolumn{3}{c|}{\textbf{2c\_vs\_64zg}} & \multicolumn{3}{c|}{\textbf{5m\_vs\_6m}} & \multicolumn{3}{c|}{\textbf{6h\_vs\_8z}} & \multicolumn{3}{c}{\textbf{corridor}} \\
        \midrule
        \textbf{Algorithm} & Good & Medium & Poor & Good & Medium & Poor & Good & Medium & Poor & Good & Medium & Poor \\
        \midrule
        BC & 31.2 $\pm$ 9.9 & 1.9 $\pm$ 1.5 & 0.0 $\pm$ 0.0$^{*}$ & 2.5 $\pm$ 2.3 & 1.9 $\pm$ 1.5 & 2.5 $\pm$ 1.3 & 8.8 $\pm$ 1.2$^{*}$ & 1.9 $\pm$ 1.5 & 0.0 $\pm$ 0.0$^{*}$ & 30.6 $\pm$ 4.1 & 15.0 $\pm$ 2.3 & 0.0 $\pm$ 0.0 \\
        BCQ & 35.6 $\pm$ 8.8 & 2.5 $\pm$ 3.6 & 0.0 $\pm$ 0.0$^{*}$ & 1.9 $\pm$ 2.5 & 1.2 $\pm$ 1.5 & 1.2 $\pm$ 1.5 & 8.8 $\pm$ 3.6$^{*}$ & 1.9 $\pm$ 1.5 & 0.0 $\pm$ 0.0$^{*}$ & 42.5 $\pm$ 6.4 & 23.1 $\pm$ 1.5 & 0.0 $\pm$ 0.0 \\
        CQL & 44.4 $\pm$ 13.0 & 2.5 $\pm$ 3.6 & 0.0 $\pm$ 0.0$^{*}$ & 1.9 $\pm$ 1.5 & 2.5 $\pm$ 1.2 & 1.2 $\pm$ 1.5 & 7.5 $\pm$ 1.5$^{*}$ & 1.9 $\pm$ 1.5 & 0.0 $\pm$ 0.0$^{*}$ & 5.6 $\pm$ 1.2 & 14.4 $\pm$ 1.5 & 0.0 $\pm$ 0.0 \\
        ICQ & 28.7 $\pm$ 4.6 & 1.9 $\pm$ 1.5 & 0.0 $\pm$ 0.0$^{*}$ & 3.8 $\pm$ 2.3 & 1.2 $\pm$ 1.5 & 1.2 $\pm$ 1.5 & 9.4 $\pm$ 2.0$^{*}$ & 2.5 $\pm$ 1.2 & 0.0 $\pm$ 0.0$^{*}$ & 42.5 $\pm$ 6.4 & 22.5 $\pm$ 3.1 & 0.6 $\pm$ 1.3$^{*}$ \\
        OMAR & 28.7 $\pm$ 9.1 & 1.2 $\pm$ 1.5 & 0.0 $\pm$ 0.0$^{*}$ & 3.8 $\pm$ 1.2 & 0.6 $\pm$ 1.2 & 0.6 $\pm$ 1.2 & 0.6 $\pm$ 1.3 & 1.9 $\pm$ 1.5 & 0.0 $\pm$ 0.0$^{*}$ & 3.1 $\pm$ 0.0 & 11.9 $\pm$ 2.3 & 0.0 $\pm$ 0.0 \\
        OMIGA & 40.6 $\pm$ 9.5 & 6.2 $\pm$ 5.6$^{*}$ & 0.0 $\pm$ 0.0$^{*}$ & 6.9 $\pm$ 1.2 & 2.5 $\pm$ 3.1 & \textbf{6.9 $\pm$ 1.2} & 5.6 $\pm$ 3.6$^{*}$ & 1.2 $\pm$ 1.5 & 0.0 $\pm$ 0.0$^{*}$ & 42.5 $\pm$ 6.4 & 22.5 $\pm$ 3.1 & 0.6 $\pm$ 1.3$^{*}$ \\
        OptDICE & 37.5 $\pm$ 3.1 & 1.0 $\pm$ 1.5 & 0.0 $\pm$ 0.0$^{*}$ & 7.3 $\pm$ 3.9 & 0.0 $\pm$ 0.0 & 0.0 $\pm$ 0.0 & 0.0 $\pm$ 0.0 & 0.0 $\pm$ 0.0 & 0.0 $\pm$ 0.0$^{*}$ & 39.6 $\pm$ 5.3 & 11.9 $\pm$ 2.3 & 0.0 $\pm$ 0.0 \\
        AlberDICE & 42.2 $\pm$ 6.4 & 1.6 $\pm$ 1.6 & 0.0 $\pm$ 0.0$^{*}$ & 3.9 $\pm$ 1.4 & 3.1 $\pm$ 0.0 & 0.0 $\pm$ 0.0 & 0.0 $\pm$ 2.6 & 2.3 $\pm$ 2.6$^{*}$ & 1.0 $\pm$ 1.5$^{*}$ & 43.1 $\pm$ 6.4 & 9.4 $\pm$ 6.8 & 0.0 $\pm$ 0.0 \\
        ComaDICE & 55.0 $\pm$ 1.5 & 8.8 $\pm$ 7.0$^{*}$ & \textbf{0.6 $\pm$ 1.3} & 8.1 $\pm$ 3.2 & 7.5 $\pm$ 2.5 & 4.4 $\pm$ 4.2$^{*}$ & \textbf{11.2 $\pm$ 5.4} & 3.1 $\pm$ 2.0$^{*}$ & \textbf{1.9 $\pm$ 3.8} & 48.8 $\pm$ 2.5 & 27.3 $\pm$ 3.4 & 0.6 $\pm$ 1.3$^{*}$ \\
        \midrule
        Oryx & \textbf{89.5 $\pm$ 3.6} & \textbf{9.0 $\pm$ 2.7} & 0.0 $\pm$ 0.0$^{*}$ & \textbf{27.3 $\pm$ 3.0} & \textbf{21.1 $\pm$ 1.4} & 6.8 $\pm$ 4.4$^{*}$ & 9.7 $\pm$ 2.6$^{*}$ & \textbf{4.4 $\pm$ 1.7} & 0.0 $\pm$ 0.0$^{*}$ & \textbf{96.6 $\pm$ 3.0} & \textbf{52.7 $\pm$ 5.8} & \textbf{2.0 $\pm$ 2.6} \\

        \bottomrule
    \end{tabular}
\end{adjustbox}
\end{table}

\begin{table}[ht!]
\centering
\caption{Results (Episode Return) on SMAC datasets from OG-MARL \citep{formanek2023ogmarl}. Results from DoF \citep{li2025dof}.}
\begin{adjustbox}{width=\textwidth}
\begin{tabular}{c|ccc|ccc|ccc|ccc|ccc}
\toprule
& \multicolumn{3}{c|}{\textbf{3m}} & \multicolumn{3}{c|}{\textbf{8m}} & \multicolumn{3}{c|}{\textbf{5m\_vs\_6m}} & \multicolumn{3}{c|}{\textbf{2s3z}} & \multicolumn{3}{c|}{\textbf{3s5z\_vs\_3s6z}}\\
    \midrule
     \textbf{Algorithms} & Good & Medium & Poor & Good & Medium & Poor & Good & Medium & Poor & Good & Medium & Poor & Good & Medium & Poor  \\
    \midrule
    MABCQ & 3.7 $\pm$ 1.1 & 4.0 $\pm$ 1.0 & 3.4 $\pm$ 1.0 & 4.8 $\pm$ 0.6 & 5.6 $\pm$ 0.6 & 3.6 $\pm$ 0.8 & 2.4 $\pm$ 0.4 & 3.8 $\pm$ 0.5 & 3.3 $\pm$ 0.5 & 7.7 $\pm$ 0.9 & 7.6 $\pm$ 0.7 & 6.6 $\pm$ 0.2 & 5.9 $\pm$ 0.3 & 6.5 $\pm$ 0.5 & 6.1 $\pm$ 0.6 \\
    MACQL & 19.1 $\pm$ 0.1 & 13.7 $\pm$ 0.3 & 4.2 $\pm$ 0.1 & 5.4 $\pm$ 0.9 & 4.5 $\pm$ 1.5 & 3.5 $\pm$ 1.0 & 7.4 $\pm$ 0.6 & 8.1 $\pm$ 0.2 & 6.8 $\pm$ 0.1 & 17.4 $\pm$ 0.3 & 15.6 $\pm$ 0.4 & 8.4 $\pm$ 0.8 & 7.8 $\pm$ 0.5 & 8.5 $\pm$ 0.6 & 5.9 $\pm$ 0.4 \\
    MAICQ & 18.7 $\pm$ 0.7 & 13.9 $\pm$ 0.8 & 8.4 $\pm$ 2.6 & \textbf{19.6 $\pm$ 0.2} & 17.9 $\pm$ 0.5 & 11.2 $\pm$ 1.3$^{*}$ & 11.0 $\pm$ 0.6 & 10.6 $\pm$ 0.6 & 6.6 $\pm$ 0.2 & 18.3 $\pm$ 0.2 & 17.0 $\pm$ 0.1$^{*}$ & 9.9 $\pm$ 0.6 & 13.5 $\pm$ 0.6 & 11.5 $\pm$ 0.2 & 7.9 $\pm$ 0.2$^{*}$ \\
    MADT & 19.0 $\pm$ 0.3 & 15.8 $\pm$ 0.5 & 4.2 $\pm$ 0.1 & 18.5 $\pm$ 0.4 & 18.2 $\pm$ 0.1 & 4.8 $\pm$ 0.1 & 16.8 $\pm$ 0.1$^{*}$ & 16.1 $\pm$ 0.2$^{*}$ & 7.6 $\pm$ 0.3 & 18.1 $\pm$ 0.1 & 15.1 $\pm$ 0.2 & 8.9 $\pm$ 0.3 & 12.8 $\pm$ 0.2 & 11.6 $\pm$ 0.3 & 5.6 $\pm$ 0.3 \\
    MADIFF & 19.3 $\pm$ 0.5$^{*}$ & 16.4 $\pm$ 2.6$^{*}$ & 10.3 $\pm$ 6.1$^{*}$ & 18.9 $\pm$ 1.1$^{*}$ & 16.8 $\pm$ 1.6 & 9.8 $\pm$ 0.9 & 16.5 $\pm$ 2.8$^{*}$ & 15.2 $\pm$ 2.6$^{*}$ & 8.9 $\pm$ 1.3 & 15.9 $\pm$ 1.2 & 15.6 $\pm$ 0.3 & 8.5 $\pm$ 1.3 & 7.1 $\pm$ 1.5 & 5.7 $\pm$ 0.6 & 4.7 $\pm$ 0.6 \\
    DoF & 19.8 $\pm$ 0.2$^{*}$ & \textbf{18.6 $\pm$ 1.2} & 10.9 $\pm$ 1.1 & 19.6 $\pm$ 0.3$^{*}$ & 18.6 $\pm$ 0.8$^{*}$ & \textbf{12.0 $\pm$ 1.2} & \textbf{17.7 $\pm$ 1.1} & \textbf{16.2 $\pm$ 0.9} & 10.8 $\pm$ 0.3$^{*}$ & 18.5 $\pm$ 0.8 & \textbf{18.1 $\pm$ 0.9} & 10.0 $\pm$ 1.1 & 12.8 $\pm$ 0.8 & 11.9 $\pm$ 0.7 & 7.5 $\pm$ 0.2 \\
    \midrule
    Oryx & \textbf{19.9 $\pm$ 0.2} & 17.4 $\pm$ 0.9$^{*}$ & \textbf{17.0 $\pm$ 1.2} & 19.3 $\pm$ 0.3 & \textbf{19.0 $\pm$ 0.3} & 5.1 $\pm$ 0.1 & 16.5 $\pm$ 0.7$^{*}$ & 15.5 $\pm$ 0.8$^{*}$ & \textbf{11.1 $\pm$ 0.4} & \textbf{19.7 $\pm$ 0.2} & 17.5 $\pm$ 0.5$^{*}$ & \textbf{12.0 $\pm$ 0.7} & \textbf{17.9 $\pm$ 0.5} & \textbf{13.0 $\pm$ 0.3} & \textbf{7.9 $\pm$ 0.1} \\
\bottomrule
\end{tabular}
\end{adjustbox}
\end{table}

\begin{table}[h]
\centering
\caption{Results (Win Rate) on CFCQL \citep{Shao2023CounterfactualCQ} datasets. Other algorithm results also from CFCQL.}
\begin{adjustbox}{max width=\textwidth}
\begin{tabular}{c|cccc|cccc|cccc|cccc}
\toprule
\textbf{Algorithm} & \multicolumn{4}{c|}{\textbf{2s3z}} & \multicolumn{4}{c|}{\textbf{3s\_vs\_5z}} & \multicolumn{4}{c|}{\textbf{5m\_vs\_6m}} & \multicolumn{4}{c}{\textbf{6h\_vs\_8z}} \\
\midrule
 & Medium & Med. Replay & Expert & Mixed & Medium & Med. Replay & Expert & Mixed & Medium & Med. Replay & Expert & Mixed & Medium & Med. Replay & Expert & Mixed \\
\midrule
MACQL & 0.17 $\pm$ 0.08 & 0.12 $\pm$ 0.08 & 0.58 $\pm$ 0.34$^{*}$ & 0.67 $\pm$ 0.17 & 0.09 $\pm$ 0.06 & 0.01 $\pm$ 0.01 & 0.92 $\pm$ 0.05 & 0.17 $\pm$ 0.10 & 0.01 $\pm$ 0.01 & 0.16 $\pm$ 0.08$^{*}$ & 0.72 $\pm$ 0.05 & 0.01 $\pm$ 0.01 & 0.01 $\pm$ 0.01 & 0.08 $\pm$ 0.04 & 0.14 $\pm$ 0.06 & 0.01 $\pm$ 0.01 \\
    MAICQ & 0.18 $\pm$ 0.02 & 0.41 $\pm$ 0.06 & 0.93 $\pm$ 0.04 & 0.85 $\pm$ 0.07 & 0.03 $\pm$ 0.01 & 0.01 $\pm$ 0.02 & 0.91 $\pm$ 0.04 & 0.10 $\pm$ 0.04 & 0.26 $\pm$ 0.03$^{*}$ & 0.18 $\pm$ 0.04$^{*}$ & 0.72 $\pm$ 0.03 & 0.67 $\pm$ 0.08 & 0.19 $\pm$ 0.04 & 0.04 $\pm$ 0.04 & 0.24 $\pm$ 0.08 & 0.05 $\pm$ 0.03 \\
    OMAR & 0.15 $\pm$ 0.04 & 0.24 $\pm$ 0.09 & 0.95 $\pm$ 0.04$^{*}$ & 0.60 $\pm$ 0.04 & 0.00 $\pm$ 0.00 & 0.00 $\pm$ 0.00 & 0.64 $\pm$ 0.08 & 0.00 $\pm$ 0.00 & 0.19 $\pm$ 0.06 & 0.03 $\pm$ 0.02 & 0.33 $\pm$ 0.06 & 0.10 $\pm$ 0.10 & 0.04 $\pm$ 0.03 & 0.04 $\pm$ 0.03 & 0.12 $\pm$ 0.06 & 0.00 $\pm$ 0.00 \\
    MADTKD & 0.18 $\pm$ 0.03 & 0.36 $\pm$ 0.07 & \textbf{0.99 $\pm$ 0.02} & 0.47 $\pm$ 0.08 & 0.01 $\pm$ 0.01 & 0.01 $\pm$ 0.01 & 0.67 $\pm$ 0.08 & 0.14 $\pm$ 0.08 & 0.21 $\pm$ 0.04 & 0.16 $\pm$ 0.04$^{*}$ & 0.58 $\pm$ 0.04 & 0.21 $\pm$ 0.05 & 0.22 $\pm$ 0.07 & 0.12 $\pm$ 0.05 & 0.48 $\pm$ 0.06 & 0.25 $\pm$ 0.07 \\
    BC & 0.16 $\pm$ 0.07 & 0.33 $\pm$ 0.04 & 0.97 $\pm$ 0.02$^{*}$ & 0.44 $\pm$ 0.06 & 0.08 $\pm$ 0.02 & 0.01 $\pm$ 0.01 & 0.98 $\pm$ 0.02$^{*}$ & 0.21 $\pm$ 0.04 & 0.28 $\pm$ 0.37$^{*}$ & 0.18 $\pm$ 0.06$^{*}$ & 0.82 $\pm$ 0.04$^{*}$ & 0.21 $\pm$ 0.12 & 0.40 $\pm$ 0.03$^{*}$ & 0.11 $\pm$ 0.04 & 0.60 $\pm$ 0.04 & 0.27 $\pm$ 0.06 \\
    IQL & 0.16 $\pm$ 0.04 & 0.33 $\pm$ 0.06 & 0.98 $\pm$ 0.03$^{*}$ & 0.19 $\pm$ 0.04 & 0.20 $\pm$ 0.05 & 0.04 $\pm$ 0.04 & \textbf{0.99 $\pm$ 0.01} & 0.20 $\pm$ 0.06 & 0.25 $\pm$ 0.02$^{*}$ & 0.18 $\pm$ 0.04$^{*}$ & 0.77 $\pm$ 0.03 & 0.76 $\pm$ 0.06$^{*}$ & 0.40 $\pm$ 0.05$^{*}$ & 0.17 $\pm$ 0.03$^{*}$ & 0.67 $\pm$ 0.03$^{*}$ & 0.36 $\pm$ 0.05 \\
    AWAC & 0.19 $\pm$ 0.05 & 0.39 $\pm$ 0.05 & 0.97 $\pm$ 0.03$^{*}$ & 0.14 $\pm$ 0.04 & 0.19 $\pm$ 0.03 & 0.08 $\pm$ 0.05 & \textbf{0.99 $\pm$ 0.02} & 0.18 $\pm$ 0.03 & 0.22 $\pm$ 0.04 & 0.18 $\pm$ 0.04$^{*}$ & 0.75 $\pm$ 0.02 & \textbf{0.78 $\pm$ 0.02} & \textbf{0.43 $\pm$ 0.06} & 0.14 $\pm$ 0.04 & 0.67 $\pm$ 0.03$^{*}$ & 0.35 $\pm$ 0.06 \\
    CFCQL & 0.40 $\pm$ 0.10$^{*}$ & \textbf{0.55 $\pm$ 0.07} & \textbf{0.99 $\pm$ 0.01} & 0.84 $\pm$ 0.09 & \textbf{0.28 $\pm$ 0.03} & 0.12 $\pm$ 0.04 & \textbf{0.99 $\pm$ 0.01} & 0.60 $\pm$ 0.14 & \textbf{0.29 $\pm$ 0.05} & \textbf{0.22 $\pm$ 0.06} & \textbf{0.84 $\pm$ 0.03} & 0.76 $\pm$ 0.07$^{*}$ & 0.41 $\pm$ 0.04$^{*}$ & \textbf{0.21 $\pm$ 0.05} & \textbf{0.70 $\pm$ 0.06} & 0.49 $\pm$ 0.08 \\
    \midrule
    Oryx & \textbf{0.46 $\pm$ 0.08} & 0.17 $\pm$ 0.05 & \textbf{0.99 $\pm$ 0.02} & \textbf{0.98 $\pm$ 0.01} & 0.11 $\pm$ 0.05 & \textbf{0.33 $\pm$ 0.10} & 0.92 $\pm$ 0.03 & \textbf{0.79 $\pm$ 0.07} & 0.21 $\pm$ 0.05 & 0.21 $\pm$ 0.04$^{*}$ & 0.56 $\pm$ 0.06 & 0.50 $\pm$ 0.07 & 0.35 $\pm$ 0.10$^{*}$ & 0.13 $\pm$ 0.05 & 0.69 $\pm$ 0.07$^{*}$ & \textbf{0.62 $\pm$ 0.08} \\
\bottomrule
\end{tabular}
\end{adjustbox}
\end{table}

\begin{table}[h!]
\centering
\caption{Results (Episode Return) on RWARE datasets from Alberdice \citet{Matsunaga2023AlberDICEAO}. Other algorithm results are also from AlberDICE.}
\label{tab:warehouse}
\begin{adjustbox}{max width=0.7\textwidth} 
\begin{tabular}{c|ccc|ccc}
\toprule
 & \multicolumn{3}{c|}{\textbf{Tiny (11x11)}} & \multicolumn{3}{c}{\textbf{Small (11x20)}} \\
\midrule
 \textbf{Algorithm} & N=2 & N=4 & N=6 & N=2 & N=4 & N=6 \\
\midrule
BC & 8.80 $\pm$ 0.43 & 11.12 $\pm$ 0.33 & 14.06 $\pm$ 0.55 & 5.54 $\pm$ 0.10 & 7.88 $\pm$ 0.24 & 8.90 $\pm$ 0.23 \\
ICQ & 9.38 $\pm$ 1.30 & 12.13 $\pm$ 0.76 & 14.59 $\pm$ 0.28 & 5.43 $\pm$ 0.33 & 7.93 $\pm$ 0.33 & 8.87 $\pm$ 0.38 \\
OMAR & 6.77 $\pm$ 1.11 & 14.39 $\pm$ 1.58$^{*}$ & 16.13 $\pm$ 2.10$^{*}$ & 4.40 $\pm$ 0.59 & 7.12 $\pm$ 0.66 & 8.41 $\pm$ 0.85 \\
MADTKD & 6.24 $\pm$ 1.04 & 9.90 $\pm$ 0.36 & 13.06 $\pm$ 0.33 & 3.65 $\pm$ 0.59 & 6.85 $\pm$ 0.62 & 7.85 $\pm$ 0.90 \\
OptiDICE & 8.70 $\pm$ 0.10 & 11.13 $\pm$ 0.76 & 14.02 $\pm$ 0.62 & 4.84 $\pm$ 0.55 & 7.68 $\pm$ 0.16 & 8.47 $\pm$ 0.45 \\
AlberDICE & 11.15 $\pm$ 0.61 & 13.11 $\pm$ 0.55 & 15.72 $\pm$ 0.62 & 5.97 $\pm$ 0.19 & 8.18 $\pm$ 0.33 & 9.65 $\pm$ 0.23 \\
\midrule
Oryx & \textbf{13.23 $\pm$ 0.25} & \textbf{16.71 $\pm$ 0.32} & \textbf{18.64 $\pm$ 0.47} & \textbf{6.95 $\pm$ 0.44} & \textbf{9.86 $\pm$ 0.32} & \textbf{10.13 $\pm$ 0.41} \\
\bottomrule
\end{tabular}
\end{adjustbox}
\end{table}

\begin{table}[h!]
    \centering
    \caption{Results (Normalised Episode Return) on MAMuJoCo datasets from OMAR \citep{Pan2021PlanBA}. Other algorithm results from CFCQL~\citep{Shao2023CounterfactualCQ}.}
    \begin{adjustbox}{max width=0.48\textwidth}
    \begin{tabular}{l|cccc}
        \toprule
        & \multicolumn{4}{c}{\textbf{2x3 HalfCheetah}} \\
        \midrule
        \textbf{Algorithm} & Random & Medium-Replay & Medium & Expert \\
        \midrule
        ICQ & 7.40 $\pm$ 0.00 & 35.60 $\pm$ 2.70 & 73.60 $\pm$ 5.00 & 110.60 $\pm$ 3.30 \\
        TD3+BC & 7.40 $\pm$ 0.00 & 27.10 $\pm$ 5.50 & 75.50 $\pm$ 3.70 & 114.40 $\pm$ 3.80 \\
        ICQL & 7.40 $\pm$ 0.00 & 41.20 $\pm$ 10.10 & 50.40 $\pm$ 10.80 & 64.20 $\pm$ 24.90 \\
        OMAR & 13.50 $\pm$ 7.00 & 57.70 $\pm$ 5.10 & 80.40 $\pm$ 10.20$^{*}$ & 113.50 $\pm$ 4.30 \\
        MACQL & 5.30 $\pm$ 0.50 & 37.00 $\pm$ 7.10 & 51.50 $\pm$ 26.70 & 50.10 $\pm$ 20.10 \\
        IQL & 7.40 $\pm$ 0.00 & 58.80 $\pm$ 6.80 & 81.30 $\pm$ 3.70 & 115.60 $\pm$ 4.20 \\
        AWAC & 7.30 $\pm$ 0.00 & 30.90 $\pm$ 1.60 & 71.20 $\pm$ 4.20 & 113.30 $\pm$ 4.10 \\
        CFCQL & \textbf{39.70 $\pm$ 4.00} & 59.50 $\pm$ 8.20 & 80.50 $\pm$ 9.60 & 118.50 $\pm$ 4.90 \\
        \midrule
        Oryx & 7.80 $\pm$ 7.80 & \textbf{91.70 $\pm$ 11.00} & \textbf{93.00 $\pm$ 10.90} & \textbf{128.70 $\pm$ 8.40} \\
        \bottomrule
    \end{tabular}
    \end{adjustbox}
\end{table}

\begin{table}[h!]
    \centering
    \caption{Results (Episode Return) on MAMuJoCo datasets from OMIGA \cite{Wang2023OfflineMR}. Other algorithm results from ComaDICE~\citep{Bui2024ComaDICEOC}.}
    \begin{subfigure}[t]{0.48\textwidth}
        \centering
        \adjustbox{max width=\textwidth}{
        \begin{tabular}{l|cccc}
            \toprule
            & \multicolumn{4}{c}{\textbf{3x1 Hopper}} \\
            \midrule
            \textbf{Algorithm} & Expert & Medium & Medium-Replay & Medium-Expert \\
            \midrule
            BCQ & 77.90 $\pm$ 58.00 & 44.60 $\pm$ 20.60 & 26.50 $\pm$ 24.00 & 54.30 $\pm$ 23.70 \\
            CQL & 159.10 $\pm$ 313.80 & 401.30 $\pm$ 199.90 & 31.40 $\pm$ 15.20 & 64.80 $\pm$ 123.30 \\
            ICQ & 754.70 $\pm$ 806.30 & 501.80 $\pm$ 14.00 & 195.40 $\pm$ 103.60 & 355.40 $\pm$ 373.90 \\
            OMIGA & 859.60 $\pm$ 709.50 & 1189.30 $\pm$ 544.30 & 774.20 $\pm$ 494.30$^{*}$ & 709.00 $\pm$ 595.70 \\
            OptDICE & 655.90 $\pm$ 120.10 & 204.10 $\pm$ 41.90 & 257.80 $\pm$ 55.30 & 400.90 $\pm$ 132.50 \\
            AlberDICE & 844.60 $\pm$ 556.50 & 216.90 $\pm$ 35.30 & 419.20 $\pm$ 243.50 & 515.10 $\pm$ 303.40 \\
            ComaDICE & \textbf{2827.70 $\pm$ 62.90} & 822.60 $\pm$ 66.20 & 906.30 $\pm$ 242.10$^{*}$ & 1362.40 $\pm$ 522.90$^{*}$ \\
            \midrule
            Oryx & 1811.70 $\pm$ 1269.70 & \textbf{2050.50 $\pm$ 927.10} & \textbf{1222.80 $\pm$ 544.60} & \textbf{1970.30 $\pm$ 1480.00} \\
            \bottomrule
        \end{tabular}}
    \end{subfigure}
    \hfill
    \begin{subfigure}[t]{0.48\textwidth}
        \centering
        \adjustbox{max width=\textwidth}{
        \begin{tabular}{l|cccc}
            \toprule
            & \multicolumn{4}{c}{\textbf{2x4 Ant}} \\
            \midrule
            \textbf{Algorithm} & Expert & Medium & Medium-Replay & Medium-Expert \\
            \midrule
            BCQ & 1317.70 $\pm$ 286.30 & 1059.60 $\pm$ 91.20 & 950.80 $\pm$ 48.80 & 1020.90 $\pm$ 242.70 \\
            CQL & 1042.40 $\pm$ 2021.60$^{*}$ & 533.90 $\pm$ 1766.40$^{*}$ & 234.60 $\pm$ 1618.30$^{*}$ & 800.20 $\pm$ 1621.50$^{*}$ \\
            ICQ & 2050.00 $\pm$ 11.90$^{*}$ & 1412.40 $\pm$ 10.90 & 1016.70 $\pm$ 53.50 & 1590.20 $\pm$ 85.60 \\
            OMIGA & 2055.50 $\pm$ 1.60$^{*}$ & 1418.40 $\pm$ 5.40 & 1105.10 $\pm$ 88.90$^{*}$ & 1720.30 $\pm$ 110.60 \\
            OptDICE & 1717.20 $\pm$ 27.00 & 1199.00 $\pm$ 26.80 & 869.40 $\pm$ 62.60 & 1293.20 $\pm$ 183.10 \\
            AlberDICE & 1896.80 $\pm$ 33.70 & 1304.30 $\pm$ 2.60 & 1042.80 $\pm$ 80.80 & 1780.00 $\pm$ 23.60$^{*}$ \\
            ComaDICE & 2056.90 $\pm$ 5.90$^{*}$ & 1425.00 $\pm$ 2.90$^{*}$ & 1122.90 $\pm$ 61.00$^{*}$ & 1813.90 $\pm$ 68.40$^{*}$ \\
            \midrule
            Oryx & \textbf{2060.60 $\pm$ 8.90} & \textbf{1426.50 $\pm$ 7.00} & \textbf{1213.00 $\pm$ 113.20} & \textbf{1863.10 $\pm$ 118.90} \\            \bottomrule
        \end{tabular}}
    \end{subfigure}
    \\[1em]
    \begin{subfigure}[t]{0.48\textwidth}
        \centering
        \adjustbox{max width=\textwidth}{
        \begin{tabular}{l|cccc}
            \toprule
            & \multicolumn{4}{c}{\textbf{6x1 HalfCheetah}} \\
            \midrule
            \textbf{Algorithm} & Expert & Medium & Medium-Replay & Medium-Expert \\
            \midrule
            BCQ & 2992.70 $\pm$ 629.70 & 2590.50 $\pm$ 1110.40$^{*}$ & -333.60 $\pm$ 152.10 & 3543.70 $\pm$ 780.90$^{*}$ \\
            CQL & 1189.50 $\pm$ 1034.50 & 1011.30 $\pm$ 1016.90 & 1998.70 $\pm$ 693.90 & 1194.20 $\pm$ 1081.00 \\
            ICQ & 2955.90 $\pm$ 459.20 & 2549.30 $\pm$ 96.30 & 1922.40 $\pm$ 612.90 & 2834.00 $\pm$ 420.30 \\
            OMIGA & 3383.60 $\pm$ 552.70 & 3608.10 $\pm$ 237.40$^{*}$ & 2504.70 $\pm$ 83.50 & 2948.50 $\pm$ 518.90 \\
            OptDICE & 2601.60 $\pm$ 461.90 & 305.30 $\pm$ 946.80 & -912.90 $\pm$ 1363.90 & -2485.80 $\pm$ 2338.40 \\
            AlberDICE & 3356.40 $\pm$ 546.90 & 522.40 $\pm$ 315.50 & 440.00 $\pm$ 528.00 & 2288.20 $\pm$ 759.50 \\
            ComaDICE & 4082.90 $\pm$ 45.70 & 2664.70 $\pm$ 54.20 & 2855.00 $\pm$ 242.20 & 3889.70 $\pm$ 81.60 \\
            \midrule
            Oryx & \textbf{4784.20 $\pm$ 161.60} & \textbf{3753.40 $\pm$ 203.40} & \textbf{3521.10 $\pm$ 223.60} & \textbf{4182.70 $\pm$ 172.60} \\
            \bottomrule
        \end{tabular}}
    \end{subfigure}
\end{table}

\newpage
\subsection{Hyperparameters}

\textbf{SMAC.} All Hyperparameter settings for Oryx were kept fixed across SMAC datasets except for the ICQ policy temperature, which we found was sensitive to the mean episode return of datasets. Thus, we used a policy temperature that ranged between 0.9 and 0.1, depending on the quality (mean episode return) of the respective datasets. For OG-MARL and OMIGA datasets, the policy temperature was set as follows: Good -> 0.9, Medium -> 0.3 and Poor -> 0.1. For CFCQL datasets, the policy temperature was set as follows: Medium -> 0.3, Med. Replay -> 0.1, Expert -> 0.9 and Mixed -> 0.8. 

\textbf{RWARE.}
All Hyperparameter settings for Oryx were kept fixed across RWARE datasets.

\textbf{MAMuJoCo.}
All Hyperparameter settings for Oryx were kept fixed across MAMuJoCo datasets.

\begin{table}[h!]
  \centering
  
  \begin{minipage}[t]{0.43\textwidth}
    \centering
    \caption{Hyperparameters for Oryx on SMAC scenarios.}
    \begin{tabular}{l|c}
      \textbf{Parameter} & \textbf{Value} \\
      \hline
      Sample sequence length & 20\\
      Sample batch size      & 64 \\
      Learning rate          & 3e-4 \\
      ICQ Value temperature  & 1000\\
      ICQ Policy temperature & $\in (0.1,0.9)$ \\
      Model embedding dimension & 64 \\
      Number retention heads     & 1 \\
      Number retention blocks    & 1 \\
      Retention heads $\kappa$ scaling parameter & 0.9 \\
    \end{tabular}
    \label{tab:oryx-smac}
  \end{minipage}
  \hfill%
  \begin{minipage}[t]{0.43\textwidth}
    \centering
    \caption{Hyperparameters for Oryx on RWARE scenarios.}
    \begin{tabular}{l|c}
      \textbf{Parameter} & \textbf{Value} \\
      \hline
      Sample sequence length & 20\\
      Sample batch size      & 64 \\
      Learning rate          & 3e-4 \\
      ICQ Value temperature  & 10000\\
      ICQ Policy temperature & 0.1 \\
      Model embedding dimension & 64 \\
      Number retention heads     & 1 \\
      Number retention blocks    & 1 \\
      Retention heads $\kappa$ scaling parameter & 0.8 \\
    \end{tabular}
    \label{tab:oryx-rware}
  \end{minipage}%

  \vspace{1em} 

  \begin{minipage}[t]{0.5\textwidth}
    \centering
    \caption{Hyperparameters for Oryx on MAMuJoCo scenarios.}
    \begin{tabular}{l|c}
      \textbf{Parameter} & \textbf{Value} \\
      \hline
      Sample sequence length & 20\\
      Sample batch size      & 64 \\
      Learning rate          & 3e-4 \\
      ICQ Value temperature  & 100000\\
      ICQ Policy temperature & 10 \\
      Model embedding dimension & 64 \\
      Number retention heads     & 1 \\
      Number retention blocks    & 3 \\
      Retention heads $\kappa$ scaling parameter & 0.9 \\
    \end{tabular}
    \label{tab:oryx-mamujoco}
  \end{minipage}
\end{table}




\end{document}